\documentclass[11pt]{article}

\usepackage[margin=1in]{geometry}

\usepackage{amsmath,amssymb,amsfonts,amsthm,mathtools}
\usepackage{bm}

\usepackage{graphicx}
\usepackage{enumitem}
\usepackage{float}
\usepackage{booktabs}
\usepackage{microtype}
\usepackage{caption}
\usepackage{subcaption}
\usepackage{xr}

\usepackage[authoryear, round]{natbib}
\bibpunct{(}{)}{; }{a}{}{, }
\usepackage{xcolor}
\usepackage[colorlinks=true,linkcolor=blue,citecolor=blue,urlcolor=blue]{hyperref}
\usepackage[nameinlink,noabbrev]{cleveref}

\newcommand{\bfx}{ \mathbf{x}}

\newcommand{\bfz}{ \mathbf{z}}
\newcommand{\bfD}{ \mathbf{D}}

\newcommand{\bbN}{ \mathbb{N}}
\newcommand{\bbP}{ \mathbb{P}} 
\newcommand{\bbR}{ \mathbb{R}}

\newcommand{\calC}{\mathcal{C}}

\newcommand{\calF}{\mathcal{F}}

\newcommand{\calN}{\mathcal{N}}

\newcommand{\calX}{ \mathcal{X}}

\newcommand{\balpha}{ \boldsymbol{\alpha}}
\newcommand{\bbeta}{ \boldsymbol{\beta}}

\newcommand{\bxi}{ \boldsymbol{\xi}}
\newcommand{\bzeta}{\boldsymbol{\zeta}}
\newcommand{\bJ}{\boldsymbol{J}}

\theoremstyle{plain}
\newtheorem{theorem}{Theorem}[section]
\newtheorem{lemma}[theorem]{Lemma}
\newtheorem{proposition}[theorem]{Proposition}
\newtheorem{corollary}[theorem]{Corollary}

\theoremstyle{definition}

\newtheorem{remark}[theorem]{Remark}

\title{Posterior Contraction of L\'evy Adaptive B-spline Regression in Besov Spaces}

\author{
  Jeunghun Oh\thanks{Department of Statistics, Seoul National University. Email: \texttt{nomad1994@snu.ac.kr}}
  , 
  Sewon Park\thanks{Department of Statistics, Sookmyung Women's University.}
  , and
  Jaeyong Lee\thanks{Department of Statistics, Seoul National University.}
}

\date{\today}

\begin{document}

\maketitle

\begin{abstract}
We investigate the asymptotic properties of the L\'evy Adaptive B-spline (LABS) regression model, a Bayesian nonparametric method that incorporates B-spline kernels into the L\'evy Adaptive Regression Kernel (LARK) model. LABS applies splines of varying degrees with independently defined knots, yielding a flexible model class capable of adapting to irregular and locally structured features of the true function. Within the nonparametric regression framework with univariate random design and Gaussian errors, we establish that the LABS posterior contracts around the true function in Besov classes at nearly minimax-optimal rates, up to a logarithmic factor, while adapting automatically to unknown smoothness. This study contributes to filling a gap in the literature, where theoretical results on posterior contraction of the LARK model in Besov spaces remain scarce. Simulation experiments on standard test functions in Besov spaces, including Blocks, Bumps, HeaviSine, and Doppler, complement the theoretical results and demonstrate the practical utility of LABS.
\end{abstract}

\section{Introduction}

Nonparametric function estimation is widely used in both theory and applications. Its goal is to recover the underlying function that generates the observed data, with applications in signal processing, medical image analysis, and anomaly detection. From a Bayesian perspective, a prior is placed on the regression function $f$, and inference is made through the induced posterior. Formally, Bayesian nonparametric regression is expressed as
\begin{align}  \label{eq:BayesModel}
\begin{split}
    Y^{(n)} \, | \, f \sim P_f,    \\
    f \sim \Pi,
\end{split}
\end{align}
where $P_f$ is the statistical model for the observed data $Y^{(n)}$, and $\Pi$ is a prior on functions $f:\mathcal{X} \rightarrow \mathbb{R}$. Examples include Gaussian process regression \citep{williams2006gaussian}, Bayesian additive regression trees \citep{chipman2010bart}, and Dirichlet process mixtures \citep{muller1996bayesian}. Bayesian models are attractive due to their conceptual simplicity, the probabilistic interpretation enabled by credible sets, and the regularisation effect of priors, which mitigates overfitting and provides robustness to outliers.

This paper focuses on a Bayesian random spline model, in particular a regression framework based on the L\'evy Adaptive Regression Kernel (LARK) model \citep{wolpert2011Stochastic} with B-splines as kernels, and studies its posterior contraction properties in Besov spaces. B-splines, owing to their recursive definition \citep{de1978practical}, are memory-efficient to compute. They consist of piecewise polynomials defined on a set of knots, each with local support. This local structure allows B-splines to capture abrupt changes in slope and curvature of the underlying function, enhancing interpretability. Thanks to their computational efficiency and local adaptability, B-spline models are widely used in computer graphics, engineering design, and biomedical applications such as EEG/ECG analysis. More recently, B-splines have been employed in deep learning through Kolmogorov–Arnold Networks (KANs) \citep{liu2025kan}, where they are proposed as an alternative to multilayer perceptrons. In KANs, each activation function in a feedforward neural network is represented as a sum of univariate B-spline series over input coordinates, which parallels the idea of additive models in statistics. With their flexibility in modelling activation functions and interpretability, KANs have gained attention across diverse fields.

In practical applications of spline models, it is natural to desire both an appropriate number of knots and their optimal placement. This is referred to as the knot selection problem. In the Bayesian literature, various approaches have been proposed to automate this process by assigning prior distributions to the knots. For instance, \citet{denison1998bayesian} and \citet{biller2000adaptive} placed discrete uniform priors on a predetermined set of candidate knots. \citet{dimatteo2001bayesian} and \citet{lindstrom2002bayesian} proposed using a Poisson prior on the number of knots and a continuous uniform prior on their locations in univariate B-spline models. However, these works did not further investigate the asymptotic properties of the resulting posterior distributions, such as posterior contraction.

Analyses of posterior contraction rates for Bayesian random spline regression models began to emerge after the development of general theories
\citeauthor{ghosal2000convergence} (\citeyear{ghosal2000convergence}, \citeyear{ghosal2007convergence})
\citet{huang2004convergence} studied the univariate random-design regression problem by proposing a sieve prior, introducing randomness into the number of knots and the degree of B-splines. The knots were simple knots with equal spacing, i.e., cardinal B-splines. They showed that the resulting model achieves minimax-optimal and adaptive rates in $L^2$ norm for functions in an $s$-Sobolev space. \citet{de2012adaptive} considered the multivariate fixed-design regression setting and proposed a Gaussian random spline prior based on Gaussian processes and tensor-product B-splines,  with fixed degree. By imposing suitable priors on the number of basis functions (equivalently, knots) through conditionally Gaussian priors, they obtained rate-adaptive result. The model achieves minimax-adaptive posterior contraction rates in empirical $L^2$ distance for functions in an $s$-Hölder space. \citet{shen2015adaptive} investigated both univariate and multivariate fixed-design regression, considering wavelets, splines, and other random basis expansions, and established sufficient conditions for Bayesian adaptation. As in earlier works, they employed cardinal B-splines, showing that tensor-product spline bases achieve minimax-adaptive posterior contraction rates in $s$-Hölder spaces under Hellinger distance. Finally, \citet{belitser2014adaptive} demonstrated that univariate random B-spline models with non-uniform knots also attain minimax-adaptive posterior contraction rates in $s$-Hölder spaces under Hellinger distance. Unlike previous works, this approach uses non-uniform knots, offering greater flexibility than models restricted to equally spaced knots, while still fixing the spline degree.

In summary, existing studies on random spline models and their posterior contraction can be broadly classified into two directions. First, most analyses focus on function spaces such as Hölder and Sobolev classes. Second, random spline models typically employ B-splines with fixed degree and uniformly spaced knots.

This work aims to extend the two directions identified above. First, we introduce the L\'evy Adaptive B-spline Regression (LABS) \citep{park2023levy}, a model class that is more flexible than those in previous random spline studies. Second, we establish that LABS achieves adaptive posterior contraction rates that are nearly minimax-optimal, up to a logarithmic factor, for regression functions in Besov spaces—going beyond the Hölder and Sobolev spaces typically considered in the literature. The contributions of this paper are as follows:

\begin{enumerate}
    \item We introduce LABS, a more general and flexible model than existing random spline approaches. LABS employs B-spline bases within the L\'evy Adaptive Regression Kernel (LARK) model \citep{wolpert2011Stochastic}. Traditional spline models define bases across a shared set of knots, with neighbouring bases overlapping at certain knots. In contrast, under the LARK model, each basis function is defined separately by its own set of knots, and splines of varying degrees can coexist within the model. Consequently, the model space considered by LABS is strictly larger than those in prior work. Moreover, allowing splines of different degrees enables the posterior to better accommodate irregular local features of the true function, without relying on a fixed global spline degree.
    
    \item We establish asymptotic properties of the LABS posterior for functions in Besov spaces. Unlike Hölder or Sobolev spaces—commonly used in previous studies—which assume the existence of derivatives, Besov spaces are defined through finite differences, making them a more general framework. We show that the LABS posterior contracts at rates that are nearly minimax-optimal up to a logarithmic factor. Furthermore, these rates are adaptive: by assigning super-polynomial rates in $n$ to the hyperparameters of the prior, one can achieve near-optimal adaptive contraction.
    
    Along with this, we incorporate the problem of estimating the unknown error variance into our theoretical framework. From a Bayesian perspective, it is unnatural to separate the estimation of the underlying  function and variance. Accordingly, our approach treats both quantities as unknown parameters and establishes joint posterior contraction rates for the function and the variance.
    
    \item We provide empirical results supporting the theoretical findings. Under suitable hyperparameter conditions that guarantee posterior contraction, LABS shows competitive performance relative to alternative nonparametric methods in estimating functions with irregular patterns. We illustrate its performance through simulation studies using standard test functions, including Blocks, Bumps, HeaviSine, and Doppler.
\end{enumerate}

The remainder of the paper is organised as follows. Section~\ref{sec:preliminaries} presents preliminaries, including notation, statistical model assumptions, and a brief introduction to Besov spaces. Section~\ref{sec:levy} introduces the modelling framework: we first describe the LARK model and then define LABS based on it, along with the hyperparameter conditions required for posterior contraction in Besov spaces. Section~\ref{sec:results} provides the main theoretical results on posterior contraction rates of LABS in Besov spaces. Section~\ref{sec:simulation} reports empirical results supporting the theory, including simulations under various settings with standard test functions in the Besov space literature, such as Blocks, Bumps, HeaviSine, and Doppler. Section~\ref{sec:concluding} concludes with a summary and discussion of the main contributions.

\section{Preliminaries}
\label{sec:preliminaries}

\subsection{Notation}
Let $\mathbb{N}$ and $\mathbb{R}$ denote the sets of natural and real numbers, respectively, and set $\mathbb{N}_0 := \mathbb{N} \cup \{0\}$. For sequences $a_n, b_n > 0$, write $a_n \lesssim b_n$ if there exists $C > 0$ such that $a_n \le C b_n$. For $a, b \in \mathbb{R}$, let $a \wedge b := \min(a,b)$ and $a \vee b := \max(a,b)$. The absolute value of $x \in \mathbb{R}$ is denoted by $|x|$, and for a set $S$, $|S|$ denotes its cardinality. The floor and ceiling functions are written as $\lfloor x \rfloor$ and $\lceil x \rceil$, respectively. For $\bfx = (x_1,\ldots,x_d) \in \mathbb{R}^d$ with $d \ge 2$, define $\|\bfx\|_p := (\textstyle\sum_{i=1}^d |x_i|^p)^{1/p}$ for $0 < p < \infty$, and $\|\bfx\|_\infty := \max_{i=1,\ldots,d} |x_i|$. For a measurable function $f:\mathcal{X} \to \mathbb{R}$ defined on
$\mathcal{X} \subset \mathbb{R}$ and a positive measure $\mu$, define the $L^p(\mu)$-norm by $\|f\|_{L^p(\mu)} := \left( \int_{\mathcal{X}} |f(x)|^p \, d\mu(x) \right)^{1/p}$ for $0 < p < \infty$. When $\mu$ is the Lebesgue measure on $[0,1]^d$ with $d \ge 1$, we write $\|f\|_{L^p}$ for simplicity. For a Lebesgue measure defined on $\mathcal{X} \subset \mathbb{R}$, we explicitly write $\|f\|_{L^p(\mathcal{X})}$. For $p = \infty$, we define $\|f\|_{L^\infty} := \sup_{x \in [0,1]} |f(x)|$ and $\|f\|_{L^\infty(\mathcal{X})} := \sup_{x \in \mathcal{X}} |f(x)|$. The $\epsilon$-covering number of a set $T$ under a metric $d$ is denoted by $\mathcal{N}(\epsilon, T, d)$. Let $K(p, q) := \int p  \log (p / q)  d\nu$ be the Kullback--Leibler divergence between two densities $p$ and $q$ with respect to $\nu$, and let $V_{2, 0} (p, q) := \int p | \log (p/q) - K(p,q) |^2 d\nu$ denote the second-order centered Kullback--Leibler variation.

For a random variable $X$ following a Gamma distribution with parameters $\alpha, \beta > 0$, we write $X \sim \text{Gam}(\alpha, \beta)$, and $X \sim \text{Inv\text{-}Gam}(\alpha, \beta)$ when $X$ follows an inverse-Gamma distribution. A normal distribution with mean $\mu \in \mathbb{R}$ and variance $\sigma^2 > 0$ is denoted by $X \sim N(\mu, \sigma^2)$, and a Poisson distribution with mean $\mu > 0$ by $X \sim \text{Poi}(\mu)$. A uniform distribution on a set $\mathcal{X}$ is written as $X \sim U(\mathcal{X})$. We write $X \sim \text{PP}(\nu)$ if $X$ is a Poisson point process with intensity measure $\nu$, and $X \sim \text{L\'evy}(\nu)$ if $X$ is a L\'evy process with L\'evy measure $\nu$.

\subsection{Model Assumptions}
The statistical model considered in this paper is a one-dimensional nonparametric regression model with Gaussian errors and random design points. From a Bayesian perspective, it is unnatural to treat the noise variance as known or to estimate it separately from the regression function. Accordingly, we incorporate the problem of estimating the unknown variance directly into our modelling and theoretical framework, and conduct posterior inference jointly on the function and the variance.

The data $\bfD^{(n)} = (x_i, y_i)_{i=1}^n$ are assumed to be observed from the following distribution:
\begin{align} \label{eq:statModel}
\begin{split}
    y_i \mid x_i &\stackrel{iid}{\sim} N(f_0(x_i), \sigma_0^2), \\
    x_i &\stackrel{iid}{\sim} P_X, \quad i = 1, \ldots, n,
\end{split}
\end{align}
where the true parameter of interest is $\theta_0 = (f_0, \sigma_0)$, consisting of an unknown real-valued function $f_0:[0,1] \to \mathbb{R}$ and an unknown variance $\sigma_0^2 > 0$. Throughout the paper, we consider the following assumptions on model \eqref{eq:statModel}:
\begin{enumerate}[label=(A\arabic*), ref=A\arabic*]
    \item\label{eq:RandDesigAssump} The distribution $P_X$ on the support $[0,1] \subset \mathbb{R}$ admits a bounded density $p_X$ with respect to the Lebesgue measure; that is, there exists $R > 0$ such that $\|p_X\|_\infty \le R$.
    \item\label{eq:VarAssump} There exist constants $0<\underline{\sigma}\le \overline{\sigma}<\infty$ such that
    $\underline{\sigma}\le \sigma_0 \le \overline{\sigma}$.
    \item\label{eq:UnifAssump} The true regression function is uniformly bounded:
    $\|f_0\|_{L^\infty}\le F$ for some $F>0$.
\end{enumerate}

\subsection{Besov Space}
In our study, the true regression function $f_0$ is assumed to belong to the Besov space $B^s_{p, q} ([0, 1])$. There are several equivalent ways to construct Besov spaces. One common approach defines the Besov norm through the coefficients of an orthonormal wavelet basis of $L^2$, while another defines it directly via moduli of smoothness. We present the latter approach; see \citet{gine2021mathematical} for a detailed discussion.

The definition of the Besov norm begins with the notion of finite differences, which captures the uniform smoothness of a function $f$. The finite differences are then averaged under the $L^p$-norm to quantify the local smoothness of $f$. This quantity is called the modulus of smoothness. Averaging the modulus of smoothness again in the $L^q$-norm yields the Besov seminorm. The Besov space then consists of functions for which both the $L^p$-norm and this seminorm are finite.

For $f \in L^p$ with $0 < p \le \infty$ and $r \in \bbN$, the $r$-th finite difference with step size $h > 0$, is defined as
\begin{equation*}
    \Delta^r_h (f)(x) := 
    \begin{cases}
        \sum_{k=0}^r \binom{r}{k} (-1)^{r - k} f(x + kh), & x \in [0, 1 - rh], \\
        0, & \text{o.w.}
    \end{cases}
\end{equation*}
Note that finite differences, rather than derivatives, are used here. The $r$-th modulus of smoothness of $f$ is defined by
\begin{equation} \label{eq:r-thModSm}
    w_{r,p}(f, t) := \sup_{0 < h \le t} \| \Delta^r_h f \|_{L^p}, \quad t > 0.
\end{equation}
The quantity $w_{r,p}(f,t)$ in \eqref{eq:r-thModSm} represents the largest $L^p$-norm among all $r$-th finite differences $\Delta^r_h(f)$ with a step size $h$ up to radius $t > 0$, thus reflecting the local features of $f$. The averaging criterion for this local smoothness is the $L^p$-norm. The Besov seminorm is then defined as
\begin{equation*}
    | f |_{B^s_{p,q}} := 
    \begin{cases}
        \left( \displaystyle\int_0^\infty [ t^{-s} w_{r,p}(f,t) ]^q \frac{dt}{t} \right)^{1/q}, & q < \infty, \\
        \displaystyle\sup_{t > 0} \{ t^{-s} w_{r,p}(f,t) \}, & q = \infty.
    \end{cases}
\end{equation*}
That is, the Besov seminorm averages the modulus of smoothness over scales $t > 0$ using the $L^q$-norm as a measure of aggregation. Finally, for $0 < p, q \le \infty$, $s > 0$, $r = \lfloor s \rfloor + 1$, and $f \in L^p$, the Besov norm is defined as
\begin{equation*}
    \| f \|_{B^s_{p,q}} := \| f \|_{L^p} + | f |_{B^s_{p,q}}.
\end{equation*}
Hence, the Besov space $B^s_{p,q}[0,1]$ is the collection of functions with finite Besov norm:
\begin{equation*}
    B^s_{p,q}([0,1]) := \{ f : \| f \|_{B^s_{p,q}} < \infty \}.
\end{equation*}
The characteristics of the space depend on the parameters $s, p, q$. In particular, $s$ and $p$ primarily determine the smoothness and integrability of the functions, whereas $q$ plays a secondary role.

\section{L\'evy adaptive regression kernel}
\label{sec:levy}

This section introduces the \textit{L\'evy Adaptive B-spline regression} (LABS) model, the object of our posterior contraction analysis. The model specialises the kernel of the \textit{L\'evy Adaptive Regression Kernel} (LARK) to B-spline bases of multiple degrees. We first give a brief overview of LARK and then present the construction of LABS.

LARK represents the regression function $f$ via a stochastic kernel expansion. Consider the nonparametric Gaussian regression model,
\begin{equation} \label{eq:GaussMeanModel}
    y_i = f(x_i) + \varepsilon_i, \quad \varepsilon_i \sim N(0, \sigma^2), \quad i=1,\ldots,n.
\end{equation}
As a prior for $f:\mathcal{X}\to\mathbb{R}$, we consider a stochastic expansion in a pre-specified family of kernel functions. Let $\{(\beta_j,\omega_j)\}_{j<J}\subset\mathbb{R}\times\Omega$ be a random collection with random cardinality $J\le \infty$, and define a signed random measure $\mathcal{L}(d\omega)= \textstyle\sum_{0\le j<J}\beta_j \delta_{\omega_j}(d\omega)$. Then
\begin{equation}    \label{eq:LARKmean}
    f(x) := \int_\Omega g(x, \omega)\, \mathcal{L}(d\omega) \;=\; \sum_{0 \le j < J} \beta_j\, g(x, \omega_j), \quad x \in \mathcal{X},
\end{equation}
where $g:\mathcal{X}\times\Omega\to\mathbb{R}$ is a Borel-measurable generating (kernel) function and $\Omega$ is the parameter space of $g$, assumed to be a separable complete metric space.

\citet{wolpert2011Stochastic} construct a L\'evy random measure $\mathcal{L}(d\omega)$ as a prior for \eqref{eq:LARKmean} through a Poisson construction. This approach (i) is simple to construct and (ii) feasible for MCMC computation of the posterior. Let $\nu(d\beta, d\omega)$ be a L\'evy measure on $\mathbb{R}\times\Omega$ satisfying the local $L^1$ integrability condition
\begin{equation} \label{eq:L1int}
    \int \int_{\mathbb{R} \times K} (1 \wedge |\beta|)\, \nu(d\beta , d\omega) \;<\; \infty
    \quad \text{for all compact } K \subset \Omega.
\end{equation}
Define a Poisson random measure $N(d\beta , d\omega)\sim\mathrm{PP}(\nu)$. Then the L\'evy random measure $\mathcal{L}$ is given by the stochastic integral
\begin{equation}    \label{eq:LevyRandMeas}
    \mathcal{L}(A) := \int \int_{\mathbb{R} \times A} \beta\, N(d\beta , d\omega)
    \;=\; \sum_{0 \le j < J} \beta_j \, I_A (\omega_j),
\end{equation}
defined for any Borel set $A\subset\Omega$ with compact closure, where $J:=N(\mathbb{R}\times A)$. The set $\{ (\beta_j, \omega_j) \}_{0 \le j < J}$ is a Poisson random subset (PRS) of $\mathbb{R}\times A$. If the L\'evy measure is finite, $\nu(\mathbb{R}\times\Omega)=M<\infty$, then $J \sim \mathrm{Poi}(M)$ and $(\beta_j,\omega_j)\stackrel{iid}{\sim}\pi(d\beta , d\omega)$ for $j=1,\ldots,J$, where $\pi(d\beta,d\omega):=M^{-1}\nu(d\beta,d\omega)$. Condition \eqref{eq:L1int} ensures that the random variable $\mathcal{L}(A)$ in \eqref{eq:LevyRandMeas} is well-defined; moreover, for disjoint Borel sets $A_i\subset\Omega$, the random variables $\mathcal{L}(A_i)$ are independent and infinitely divisible. When $\nu (\mathbb{R} \times \Omega) = \infty$, we may have $J=\infty$, in which case $\mathcal{L}$ is defined only on countably many points $\omega_j\in A$ satisfying $\textstyle\sum_{\omega_j\in A}|\beta_j|<\infty$ a.s.

Defining the mean function by the stochastic integral of the kernel with respect to $\mathcal{L}$ yields a L\'evy random field $\mathcal{L}[g(x)]:=\textstyle\int_\Omega g(x,\omega)\,\mathcal{L}(d\omega)$. For any $g$ satisfying
\begin{equation}    \label{eq:L1intRandField}
    \int \int_{[-1, 1] \times \Omega} | \beta \, g(x, \omega) |\, \nu(d\beta ,  d\omega) \;<\; \infty,
\end{equation}
we have
\begin{equation*}
    f(x) := \mathcal{L}[g(x)] \;=\; \int_\Omega \int_\mathbb{R} \beta \, g(x, \omega)\, N(d\beta , d\omega)
    \;=\; \sum_{0 \le j < J} \beta_j \, g(x, \omega_j),
\end{equation*}
which is well-defined. Since \eqref{eq:L1int} holds, any bounded measurable $g$ with compact support satisfies \eqref{eq:L1intRandField}.

If the L\'evy measure fails to satisfy \eqref{eq:L1int}, both the L\'evy random measure and the induced random field may be ill-defined. In such cases, one can replace \eqref{eq:L1int} with weaker local $L^2$ integrability conditions. This extension is beyond the scope of the present paper; see \citet{wolpert2011Stochastic}, Section 2, Theorem 1 for details.

In summary, under the Gaussian error assumption \eqref{eq:GaussMeanModel}, the basic LARK specification is
\begin{align*}
\begin{split}
    \mathbb{E}[y \mid \mathcal{L},\theta] &:= f(x) = \int_\Omega g(x,\omega)\, \mathcal{L}(d\omega), \\
    \mathcal{L}\mid \theta &\sim \mathrm{L\acute{e}vy}(\nu), \\
    \theta &\sim \pi_\theta(d\theta),
\end{split}
\end{align*}
where $\theta$ denotes the hyperparameters of the L\'evy process.

Examples of L\'evy random fields include compound Poisson processes, gamma/symmetric-gamma random fields, and symmetric $\alpha$-stable random fields ($0<\alpha<2$). With the exception of the compound Poisson case, many have $\nu(\mathbb{R}\times\Omega)=\infty$, in which case the coefficient sequence $(\beta_j:\omega_j\in A)$ need not be $\ell^1$ and an approximation of $f$ is required. This motivates the use of a truncated prior based on the truncated L\'evy measure
\begin{equation*}
    \nu_\varepsilon (d\beta \, d\omega) := \nu (d\beta , d\omega)\, I(|\beta| > \varepsilon) \quad \text{for }\varepsilon > 0. 
\end{equation*}

When $\nu$ and $g$ satisfy the requisite $L^1$ or $L^2$ integrability, it is known that the model
\begin{align*}
\begin{split}
    f_\varepsilon(x) &:= \sum_{0\le j<J_\varepsilon}\beta_j\, g(x,\omega_j), \\
    J_\varepsilon &\sim \mathrm{Poi}(\nu_\varepsilon^+), \\
    (\beta_j,\omega_j)_{j=1,\ldots,J_\varepsilon}\mid J_\varepsilon &\sim \nu_\varepsilon(d\beta , d\omega)/\nu_\varepsilon^+,
\end{split}
\end{align*}
converges in probability to $f(x)$ as $\varepsilon\to 0$ \citep[Section~3, Corollary~2]{wolpert2011Stochastic}. Hence, in practice we employ the finite representation
\begin{align*}
\begin{split}
    y_i \mid f &\stackrel{ind}{\sim} N\!\left(f(x_i),\sigma^2\right), \\
    f(x_i) &:= \sum_{0\le j<J}\beta_j\, g(x_i,\omega_j), \\
    (\beta_j,\omega_j)_{0\le j<J}\mid J,\theta &\stackrel{iid}{\sim} \pi_\beta(\beta_j)\, d\beta_j\, \pi_\omega(d\omega_j), \\
    J\mid \theta &\sim \mathrm{Poi}(\nu_\varepsilon^+), \quad \nu^+_\varepsilon = \nu_\varepsilon(\mathbb{R} \times \Omega), \\
    \theta &\sim \pi_\theta(d\theta).
\end{split}
\end{align*}

We now turn to LABS. In \citet{park2023levy}, LABS employs B-splines of various degrees as the generating functions $g$. A B-spline of degree $k \in \mathbb{N}_0$ is uniquely determined by a nondecreasing sequence of $k+2$ knots $\boldsymbol{\xi}_k=(\xi_{k,1},\ldots,\xi_{k,k+2})$, and a basis function $B_k(\cdot;\boldsymbol{\xi}_k):\mathcal{X}\to\mathbb{R}$ is defined recursively \citep{de1978practical} by
\begin{align*}
    \begin{split}   \label{eq:BsplineDef}
        B_0(x; \boldsymbol{\xi}_0)  &:= \begin{cases}
            1,   & \text{if } \xi_{0,1} \leq x < \xi_{0,2}, \\
            0,   & \text{otherwise,}
        \end{cases} \\
        B_k(x; \boldsymbol{\xi}_k) &:= \frac{x - \xi_{k,1}}{\xi_{k, k + 1} - \xi_{k,1}} B_{k - 1} (x; \boldsymbol{\xi}^{L}_k)
        + \frac{\xi_{k, k + 2} - x}{\xi_{k, k + 2} - \xi_{k, 2}} B_{k - 1}(x; \boldsymbol{\xi}^{R}_k),
    \end{split}
\end{align*}
where $\boldsymbol{\xi}^L_k := (\xi_{k,1}, \xi_{k,2}, \ldots, \xi_{k,k + 1})$ and $\boldsymbol{\xi}^R_k := (\xi_{k,2}, \xi_{k,3}, \ldots, \xi_{k,k+2})$. B-splines (i) are piecewise polynomials of degree $k$ on each interval $[\xi_{k,i},\xi_{k,i+1})$, (ii) are $(k-1)$ times continuously differentiable at the knots, and (iii) admit memory-efficient computation due to the recursive structure.

LABS corresponds to a compound Poisson specification. The mean function, following the LARK representation \eqref{eq:LARKmean}, is written as
\begin{equation}    \label{eq:LABSmean}
    f(x) = \sum_{k\in S}\sum_{1\le l\le J_k} \beta_{k,l}\, B_k(x; \boldsymbol{\xi}_{k,l})
         = \sum_{k\in S}\int_{\Omega} B_k(x;\boldsymbol{\xi}_{k})\, \mathcal{L}_k(d\boldsymbol{\xi}_k),
\end{equation}
where $S\subset\mathbb{N}_0$ is the set of spline degrees used, $B_k$ serves as the generating function $g$, and $\Omega:=\mathcal{X}^{k+2}$. Each $J_k$ is independent with $J_k\sim \mathrm{Poi}(M_k)$, and $(\beta_{k,l},\boldsymbol{\xi}_{k,l})\stackrel{iid}{\sim}\pi_k(d\beta_k , d\boldsymbol{\xi}_k)$ for $l=1,\ldots,J_k$. The L\'evy random measures determining the stochastic integrals are
\begin{equation*}
    \mathcal{L}_k \stackrel{ind}{\sim} \mathrm{L\acute{e}vy}\!\left(\nu_k (d\beta_k , d\boldsymbol{\xi}_k)\right) \quad \text{for all } k \in S,
\end{equation*}
with $\nu_k(\mathbb{R}\times\Omega)=M_k<\infty$. Moreover, for each $k\in S$,
\begin{equation*}
    \int \int_{\mathbb{R} \times \Omega} \big( 1 \wedge | \beta_k \, B_k(x ; \boldsymbol{\xi}_k) | \big)\, \nu_k(d\beta_k , d\boldsymbol{\xi}_k) \;<\; \infty
    \quad \forall\, x \in \mathcal{X},
\end{equation*}
so that condition \eqref{eq:L1intRandField} is satisfied.

A hierarchical specification of LABS is suggested in \citet{park2023levy} as follows:
\begin{align}   \label{eq:LABSprior}
\begin{split}
    y_i \mid x_i &\stackrel{ind}{\sim} N\!\big(f(x_i), \sigma^2\big), \quad i = 1, \ldots, n,  \\
    f(x) &= \sum_{k \in S} \sum_{l = 1}^{J_k} \beta_{k,l}\, B_k (x ; \boldsymbol{\xi}_{k,l}),  \\
    \sigma^2 &\sim \mathrm{Inv\text{-}Gam} \!\left( \frac{r}{2}, \frac{r R}{2} \right),   \\
    J_k &\sim \mathrm{Poi}(M_k), \\
    M_k &\sim \mathrm{Gam} (a_k, b_k),  \\
    \beta_{k,l} &\stackrel{iid}{\sim} N (0, \phi_k^2), \quad l = 1, \ldots, J_k,    \\
    \boldsymbol{\xi}_{k,l} &\stackrel{iid}{\sim} U(\mathcal{X}^{k + 2}), \quad l = 1, \ldots, J_k.
\end{split}
\end{align}
Here, $\mathcal{X}$ is an interval containing the domain of the true function $f_0$; we take $[0,1]\subset\mathcal{X}$ to improve boundary behaviour.

In summary, $f$ is represented by a stochastic B-spline expansion with a multi-resolution structure. By selecting spline orders locally, the model adapts to target functions with heterogeneous smoothness across the domain. The number of basis functions at each order, $J_k$, is automatically adjusted through the Poisson prior, enabling data-driven model complexity. Because the model dimension changes in posterior simulation, we employ reversible jump MCMC (RJMCMC) of \citet{green1995reversible}. Implementation details are provided in \citet{park2023levy}, Section 4, Algorithm.

\begin{remark}
    The hierarchical specification in \eqref{eq:LABSprior} describes the general LABS prior. 
    To establish posterior concentration over Besov spaces, we later specialise this prior by imposing suitable, sample-size-dependent choices of the hyperparameters; see Section~\ref{subsec:posterior_contraction}.
\end{remark}

\section{Results}
\label{sec:results}

\subsection{Posterior Contraction}
This subsection briefly overviews posterior contraction of the Bayesian regression model, showing that the posterior distribution of the model concentrates around the true $\theta_0 = (f_0, \sigma_0)$. Let $\Theta$ denote the entire model space of the proposed model. By Bayes' rule, the posterior distribution is given by
\begin{equation*}
    \Pi(B \mid \mathbf{D}^{(n)}) :=
    \frac{\int_B p^{(n)}(\mathbf{D}^{(n)} \mid \theta)\, \Pi(d \theta)}{\int p^{(n)}(\mathbf{D}^{(n)} \mid \theta)\, \Pi(d \theta)},
\end{equation*}
where $B \subset \mathcal{M}$ is an arbitrary measurable set and $p^{(n)}$ is the density for the data $\bfD^{(n)}$.  
Our goal is to show that, for a suitable distance $d(\theta,\theta_0)$ between $\theta \in \Theta$ and the truth $\theta_0$, and for a contraction rate $\epsilon_n \to 0$,
\begin{equation*}
    \Pi\!\left( d(\theta, \theta_0) \le \epsilon_n \,\middle|\, \mathbf{D}^{(n)} \right)
    \rightarrow 1 \quad \text{in } \mathbb{P}_0^{(n)}\text{-probability.}
\end{equation*}
where $\bbP^{(n)}_0$ denotes the true distribution which generated $\bfD^{(n)}$ as in \eqref{eq:statModel}. We use as our distance the root average squared Hellinger distance $d_{n,H}$, defined on the product measure $P_\theta^{n} := \textstyle\bigotimes_{i=1}^n P_{\theta,i}$ associated with the observation-wise distributions $P_{\theta,i}$.  
Assume that each $P_{\theta,i}$ admits a density $p_{\theta, i}$.  
Given densities $p_{\theta_1,i}$ and $p_{\theta_2,i}$, the Hellinger distance is defined by $d_H(p,q) = \big( \textstyle\int (\sqrt{p} - \sqrt{q})^2 d\mu \big)^{1/2}$, and the root average squared Hellinger distance is
\begin{equation*}\label{eq:RootAvgSqrHelDist}
    d_{n,H}(\theta_1,\theta_2)
    := \sqrt{\frac{1}{n} \sum_{i=1}^n d_H^2(p_{\theta_1,i}, p_{\theta_2,i})}.
\end{equation*}

When the input domain is $[0,1]^d$ and the true function $f_0$ is $s$-regular (smooth), the minimax convergence rate under $L^2$ loss is known to be \citep{tsybakov2003introduction}
\begin{equation}    \label{eq:minimax_rate}
    \epsilon_n \asymp n^{-\frac{s}{2s + d}}.
\end{equation}
For $f_0$ belonging to a Besov class, \cite{donoho1998minimax} showed that the wavelet shrinkage estimator achieves the optimal convergence rate and adapts to the unknown smoothness of $f_0$.  
In the Bayesian framework, this corresponds to the posterior contraction rate.  
More recently, \cite{lee2022asymptotic} showed that Bayesian ReLU networks achieve a posterior contraction rate that is near-optimal relative to \eqref{eq:minimax_rate}, up to a logarithmic factor.

To establish posterior contraction, we employ a modified version of Theorem~4 in \citet{ghosal2007convergence}.
Let $\epsilon_n \to 0$ be a sequence satisfying $n\epsilon_n^2 \to \infty$, and let $\Theta_n \subset \Theta$ be an increasing sequence of subsets.  
If there exists $C > 2$ such that the following three conditions hold:
\begin{align}
    &\sup_{\epsilon>\epsilon_n}\log \mathcal{N}\!\left(\frac{\epsilon}{36},\, \Theta_n,\, d_{n,H}\right)
        \lesssim n\epsilon_n^2, \label{eq:EntropyCond}\\
    &\Pi_n(\Theta\setminus \Theta_n)
        = o\!\left(e^{-(C+2)n\epsilon_n^2}\right), \label{eq:TailBoundCond}\\
    &\Pi_n\!\big(B^\ast_{n,2}(\theta_0,\epsilon_n)\big)
        \ge e^{-C n\epsilon_n^2}
        \quad \text{for all large } n, \label{eq:KLSuppCond}
\end{align}
then for any sequence $M_n \to \infty$,
\begin{equation*}
    \Pi \!\left( \theta \in \Theta : d_{n,H}(\theta, \theta_0) > M_n \epsilon_n
    \,\middle|\, \mathbf{D}^{(n)} \right)
    \rightarrow 0, \quad n \to \infty,
\end{equation*}
in $\mathbb{P}_0^{(n)}$-probability.  
That is, the posterior distribution contracts toward $\theta_0 = (f_0, \sigma_0^2)$ at rate $\epsilon_n$.

Condition \eqref{eq:EntropyCond} is an entropy (log-covering number) bound ensuring that the model space is not excessively complex.  
Condition \eqref{eq:TailBoundCond} controls the prior mass outside the sieve $\Theta_n$, requiring it to decay exponentially.  
Condition \eqref{eq:KLSuppCond} guarantees that the prior assigns sufficient mass to Kullback–Leibler neighborhoods $B_{n,2}^\ast (\theta_0, \epsilon_n)$ of the true parameter which is defined as
\begin{equation*}
    B_{n,2}^\ast (\theta_0, \epsilon) := \left\{ \theta : \frac{1}{n} \sum_{i=1}^n K (p_{\theta_0, i}, \, p_{\theta, i} ) \leq \epsilon^2, \; \frac{1}{n} \sum_{i=1}^n V_{2,0} (p_{\theta_0, i}, \, p_{\theta, i} ) \leq \epsilon^2 \right\}.
\end{equation*}
The main task in the subsequent analysis is to identify a rate $\epsilon_n$ and a sieve $\Theta_n$ satisfying these three conditions simultaneously.

\subsection{Posterior Contraction of LABS in Besov Spaces}
\label{subsec:posterior_contraction}

We now establish posterior contraction of the LABS model \eqref{eq:LABSprior} in the Besov space $B^s_{p,q}([0,1])$. We begin by specifying the hyperparameters of the prior so that they depend on the sample size $n$. Specifically, let
\begin{enumerate}[label=(P\arabic*), ref=P\arabic*]
    \item\label{eq:BspDegreeCond} The set $S$ is a finite collection of spline degrees satisfying $k \ge 1$.
    \item\label{eq:InvGamCond} The hyperparameters of the inverse-gamma prior $\sigma^2 \sim \mathrm{Inv\text{-}Gam}(r/2, rR/2)$ satisfy $r > 0$ and $R > 0$.
    \item\label{eq:GamPriorCond} For the Poisson mean hyperprior $M_k \sim \mathrm{Gam}(a_k,b_k)$, we set
    \begin{align*}
        a_k &> 0, \\
        b_k &= b_n := e^{C_b(\log n)^2}, \quad \forall k\in S,
    \end{align*}
    for a constant $C_b>0$.
    \item\label{eq:NormPriorCond} The regression coefficients satisfy $\beta_{k,l}\sim N(0,\phi_k^2)$ with
    \begin{equation*}
        \phi_k=\phi_n:= e^{ C_\phi(\log n)^2 }, \quad \forall k\in S,
    \end{equation*}
    for a constant $C_\phi > 0.$
    \item\label{eq:UnifPriorCond} The knot prior is a truncated uniform distribution with minimum spacing constraint:
    \begin{equation*}
    \boldsymbol{\xi}_{k,l} \sim U(\mathcal{X}^{k+2}(\delta_n)), \quad
        \delta_n:= e^{-C_\delta(\log n)^2 },
    \end{equation*}
    where $\mathcal{X}^{k+2}(\delta_n)=\{\mathbf{z}\in\mathcal{X}^{k+2}: m(\mathbf{z})\ge \delta_n\}$ and $m(\mathbf{z})=\min_{1\le j\le k+1}|z_{(j+1)}-z_{(j)}|$ for $z_{(1)} < \cdots < z_{(k + 2)}$.
\end{enumerate}

The following theorem shows that, under previously mentioned setting, LABS achieves nearly optimal (up to logarithmic factors) adaptive posterior contraction rates over Besov classes with $s > \left( 1/p - 1/2 \right)_+$.
\begin{theorem} \label{eq:main_theorem}
Let the observations $\mathbf{D}^{(n)} = \{(x_i, y_i): i=1,\ldots,n\}$ follow model \eqref{eq:statModel} under Assumption \eqref{eq:RandDesigAssump}. Suppose the LABS model \eqref{eq:LABSprior} satisfy \eqref{eq:BspDegreeCond}–\eqref{eq:UnifPriorCond}.  
For any $f_0 \in B^s_{p,q}([0,1])$ satisfying
\begin{align} \label{eq:Besov_smoothness_cond}
   s>\left(\frac{1}{p}-\frac{1}{2}\right)_+,
\end{align}
if $S$ in \eqref{eq:BspDegreeCond} contains a degree $k^\star$ such that
\begin{equation}    \label{eq:Besov_model_degree_cond}
    s < \min{(k^\star, k^\star - 1 + \frac{1}{p})},
\end{equation}
then the posterior distribution concentrates around $\theta_0 = (f_0, \sigma_0)$ at rate
\begin{equation} \label{eq:rate}
        \epsilon_n = n^{-\frac{s}{2s+1}} (\log n)^2
\end{equation}
under the metric $d_{n,H}$, that is,
\begin{equation*}
    \Pi\!\left( (f, \sigma) : d_{n,H}(p_{f, \sigma}, \, p_{f_0, \sigma_0}) > M_n \epsilon_n
    \,\middle|\, \mathbf{D}^{(n)} \right) \to 0
\end{equation*}
in $\mathbb{P}_0^{(n)}$-probability as $n \to \infty$.
\end{theorem}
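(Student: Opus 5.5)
We verify the three conditions \eqref{eq:EntropyCond}--\eqref{eq:KLSuppCond} of the modified Ghosal--van der Vaart theorem with $\epsilon_n = n^{-s/(2s+1)}(\log n)^2$. Throughout we use the standard Gaussian-regression comparisons. First, $d_{n,H}$ between $p_{f_1,\sigma_1}$ and $p_{f_2,\sigma_2}$ is bounded by a constant times $\|f_1-f_2\|_\infty/\underline{\sigma} + |\sigma_1-\sigma_2|/\underline{\sigma}$ when both $\sigma$'s are at least $\underline{\sigma}$. Second, the averaged Kullback--Leibler divergence and $V_{2,0}$ are bounded by a constant times $\|f-f_0\|_{L^2(P_X)}^2 + |\sigma-\sigma_0|^2$, uniformly over $\sigma$ in a neighbourhood of $\sigma_0$ and $\|f\|_\infty$ bounded. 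By \eqref{eq:RandDesigAssump}, $\|f-f_0\|_{L^2(P_X)}^2 \le R\|f-f_0\|_{L^2}^2$, so prior-mass arguments may be carried out in $L^2$ or $L^\infty$ on $[0,1]$.

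\textbf{Prior mass \eqref{eq:KLSuppCond}.} This is the approximation step. We first invoke (or prove) a free-knot spline approximation result in Besov spaces, of the DeVore--Popov/Petrushev type. For $f_0\in B^s_{p,q}$ with \eqref{eq:Besov_smoothness_cond} and $s<\min(k^\star,k^\star-1+1/p)$, there is a spline of degree $k^\star$ with $N\asymp (n/\log^2 n)^{1/(2s+1)}$ free knots, hence $N+k^\star$ B-splines $B_{k^\star}(\cdot;\bxi_l)$, whose $L^2$ (indeed $L^\infty$, using $s>1/p-1/2$ and a local embedding) error is $\lesssim N^{-s}\lesssim \epsilon_n$. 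The constraint $s<k^\star-1+1/p$ is exactly what makes nonlinear spline approximation of order $k^\star$ achieve rate $N^{-s}$ in the $(1/p-1/2)_+$ regime. We then perturb the approximant so that it lies in the support of the prior. Knots are moved to satisfy the minimum spacing $\delta_n=e^{-C_\delta(\log n)^2}$, which costs only a polynomially small error, since the coefficients are bounded by $\|f_0\|_\infty$ up to polynomial factors (B-spline stability). We then lower-bound the prior probability of the event
\[
J_{k^\star}=N+k^\star, \qquad J_k=0 \ (k\neq k^\star), \qquad |\beta_l-\beta_l^0|\le \eta, \qquad \|\bxi_l-\bxi_l^0\|_\infty\le\eta, \qquad |\sigma-\sigma_0|\le\epsilon_n,
\]
with $\eta$ a suitable power of $\epsilon_n\delta_n$. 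B-splines are Lipschitz in knots with constant $\lesssim\delta_n^{-1}$ when spacings are at least $\delta_n$, so this event lies in $B^*_{n,2}$. Each factor is at least $e^{-c(\log n)^2}$: the Poisson/Gamma part uses $b_n=e^{C_b(\log n)^2}$, giving $\Pi(J=N)\gtrsim e^{-cN(\log n)^2}$ after integrating over $M_k$; the normal coefficients with $\phi_n=e^{C_\phi(\log n)^2}$ contribute $\eta/\phi_n$ each; and the uniform knots contribute $\eta^{k+2}$ each. Taking the product gives mass $\ge e^{-cN(\log n)^2}=e^{-Cn\epsilon_n^2}$. This choice is what forces the $(\log n)^2$ in $\epsilon_n$, since $N(\log n)^2 \asymp n\epsilon_n^2$ requires $N\asymp n^{1/(2s+1)}(\log n)^2$.

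\textbf{Sieve and tail \eqref{eq:TailBoundCond}.} Take
\[
\Theta_n=\Bigl\{\textstyle\sum_k J_k\le \bar J_n,\ \max|\beta_{k,l}|\le \bar B_n,\ n^{-1}\le\sigma^2\le e^{c(\log n)^2}\Bigr\},
\]
with $\bar J_n\asymp n\epsilon_n^2/\log n$ (adjusting the power of the logarithm) and $\bar B_n=e^{c'(\log n)^2}$; knots automatically have spacing at least $\delta_n$. The tail $\Pi(\sum_k J_k>\bar J_n)$ is bounded using the marginal negative-binomial law of $J_k$. The large scale $b_n$ makes the success probability tiny, so the mass is roughly $(1/b_n)^{\bar J_n}=e^{-C_b\bar J_n(\log n)^2}$, which is $o(e^{-(C+2)n\epsilon_n^2})$ once the constants are chosen appropriately. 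The Gaussian tails of the $\beta$'s beyond $\bar B_n\gg\phi_n\sqrt{n}$ and the inverse-gamma tails of $\sigma^2$ are handled similarly.

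\textbf{Entropy \eqref{eq:EntropyCond}.} On $\Theta_n$, cover separately for each configuration $(J_k)_{k\in S}$, of which there are at most $\bar J_n^{|S|}$. Discretise coefficients on a grid of mesh $\epsilon\delta_n/(\bar J_n\bar B_n)$, knots on a grid of mesh $\epsilon\delta_n^2/(\bar J_n \bar B_n)$ (using the knot-Lipschitz bound for B-splines with spacing at least $\delta_n$), and $\sigma$ on a log grid. This yields
\[
\log\calN\lesssim \bar J_n\log\frac{\bar J_n\bar B_n}{\epsilon\,\delta_n}\lesssim \bar J_n(\log n)^2\lesssim n\epsilon_n^2 .
\]
Here the lower bound on $\sigma$ controls the Hellinger-to-sup-norm constant polynomially.

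\textbf{Main obstacle.} The main obstacle is the approximation step. We need a free-knot B-spline approximation with rate $N^{-s}$ in a norm strong enough for the KL bounds, valid for $p<2$ via nonlinear (adaptive) knot placement. We must also keep the coefficients and the minimum knot spacing under control so the approximant has non-negligible prior mass. We would first try to cite Petrushev's/DeVore--Popov's characterisation of Besov spaces by free-knot splines, and handle the spacing and coefficient bounds by a local quasi-interpolant argument.
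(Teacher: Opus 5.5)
Your architecture matches the paper's: the three Ghosal--van der Vaart conditions, and a sieve capping $\sum_k J_k$, $\max_{k,l}|\beta_{k,l}|$ and the range of $\sigma$, with knot spacing automatic under (P5). The tools are also the same: negative-binomial, Gaussian and inverse-gamma tail bounds, a covering built on the Lipschitz dependence of B-splines on their knots, and prior mass from perturbing a B-spline approximant. However, your upper cap on $\sigma$ fails the tail condition as written. The cap $\sigma^2\le e^{c(\log n)^2}$ is too small because the inverse-gamma prior has a polynomial upper tail, $\Pi(\sigma^2>T)\asymp T^{-r/2}$. Hence $\Pi(\sigma^2>e^{c(\log n)^2})=e^{-(rc/2+o(1))(\log n)^2}$, which is not $o(e^{-(C+2)n\epsilon_n^2})$ when $n\epsilon_n^2=n^{1/(2s+1)}(\log n)^4$. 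You need $\log\overline\sigma_n\gtrsim n\epsilon_n^2$, which is why the paper takes $\overline\sigma_n=\exp\{C_{\overline\sigma}n^{1/(2s+1)}(\log n)^4\}$ with $C_{\overline\sigma}>(C+2)/r$; on your logarithmic grid this adds only $O(\log n)$ to the entropy. Likewise, the power of $\log n$ you left open in $\bar J_n$ must be $2$. With $\bar J_n\asymp n\epsilon_n^2/\log n$ the entropy is of order $n\epsilon_n^2\log n$. With $\bar J_n\asymp n\epsilon_n^2/(\log n)^2=n^{1/(2s+1)}(\log n)^2$ (the paper's $J_n$), both conditions hold, since $b_n^{-\bar J_n}=e^{-C_bC_Jn\epsilon_n^2}$ once $C_J>(C+2)/C_b$.

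The more serious problem is the approximation step. You claim that $s>(1/p-1/2)_+$ plus a local embedding gives an $L^\infty$ approximation, and you bound the coefficients via $\|f_0\|_\infty$. Both fail under the theorem's hypotheses, which do not give $s>1/p$. For instance, $f_0(x)=|x-1/2|^{-1/4}$ lies in $B^{0.6}_{1,q}([0,1])$ and satisfies $0.6>(1/p-1/2)_+=1/2$ and $0.6<\min(k^\star,k^\star-1+1/p)=1$ for $k^\star=1$. Yet it is unbounded, so no spline approximates it uniformly.

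The paper never needs sup-norm closeness to $f_0$. It uses Lemma 2 of Suzuki (2018) with $r=2$, which is where $s<\min(k^\star,k^\star-1+1/p)$ enters. This gives $\asymp n^{1/(2s+1)}$ dyadic cardinal B-splines of degree $k^\star$ with $L^2$ error $\lesssim n^{-s/(2s+1)}$, polynomially bounded coefficients and knot spacing $\gtrsim n^{-c}\gg\delta_n$, with no boundedness of $f_0$. This approximant already lies in the prior support, so no knots need moving. The sup norm is used only for the spline-to-spline difference $f-\hat f_n$, and the Kullback--Leibler quantities are controlled through $\|f-f_0\|_{L^2(P_X)}$.

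You are right that the random-design $V_{2,0}$ needs more than $L^2$ control. It contains $\operatorname{Var}_{P_X}((f-f_0)^2)/(4\sigma^4)$, which is absent from the paper's displayed formula. But in the example above with uniform $P_X$ (take $f_0(x)=|x-1/2|^{-1/4}$ as before), $f_0\notin L^4$, so this term is infinite for every spline. It must therefore be handled by conditioning on the design, so that $V_{2,0}$ involves only the empirical norm, or by an extra boundedness assumption such as (A3). Uniform approximation of $f_0$ cannot fix it.
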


\begin{proof}
The proof is provided in the Appendix~\ref{sec:main_thm_proof}
\end{proof}

Under the same setup as in the theorem, adding a uniform boundedness assumption yields an analogous adaptive posterior contraction result under the $L^2(P_X)$ loss.

\begin{corollary}
Suppose that model \eqref{eq:statModel} satisfies Assumption
\eqref{eq:RandDesigAssump} together with
\eqref{eq:VarAssump} and \eqref{eq:UnifAssump}.
Assume the same prior specification and conditions as in
\textbf{Theorem~\ref{eq:main_theorem}},
except that the prior for
$\sigma^2$ in \eqref{eq:InvGamCond} is truncated to
$[\underline{\sigma}^2,\overline{\sigma}^2]$ for some constants
$0<\underline{\sigma}<\overline{\sigma}<\infty$.
Additionally, define
\begin{equation*}
    \operatorname{clip}_F (x) := \min \{F, \max \{ -F, x \} \}.
\end{equation*}
Then, for any sequence $\tilde M_n \to \infty$,
\begin{equation*}
    \Pi\!\left(
    (f,\sigma) :
    \|\operatorname{clip}_F \circ f-f_0\|_{L^2(P_X)} + |\sigma-\sigma_0|
    > \tilde{M}_n \epsilon_n
    \,\middle|\,
    \mathbf{D}^{(n)}
    \right)
    \to 0
\end{equation*}
in $\mathbb{P}_0^{(n)}$-probability as $n \to \infty$.
\end{corollary}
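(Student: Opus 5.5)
The plan is to derive the corollary from Theorem~\ref{eq:main_theorem} by converting contraction in $d_{n,H}$ into the $L^2(P_X)$ loss. The conversion uses three ingredients: the truncation of $\sigma$, the clipping at level $F$, and the boundedness of $f_0$.

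First, we check that the theorem still applies when the prior on $\sigma^2$ is truncated to $[\underline{\sigma}^2,\overline{\sigma}^2]$. By \eqref{eq:VarAssump}, $\sigma_0$ lies inside this interval, so the truncated prior still puts mass at least a constant times the untruncated mass on any neighbourhood of $\sigma_0$. Hence \eqref{eq:KLSuppCond} continues to hold. The sieve $\Theta_n$ can only shrink, so \eqref{eq:EntropyCond} holds, and the tail bound \eqref{eq:TailBoundCond} is unaffected. The theorem therefore gives contraction at rate $M_n\epsilon_n$ in $d_{n,H}$ for every $M_n\to\infty$.

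Second, we pass from $d_{n,H}$ to the population Hellinger distance $d_H$ between the joint densities of $(x,y)$. With random design, the observations $(x_i,y_i)$ are i.i.d., so
\[
d_{n,H}^2=\int d_H^2\bigl(N(f(x),\sigma^2),N(f_0(x),\sigma_0^2)\bigr)\,dP_X(x),
\]
interpreting the model as a density on $(x,y)$. For two Gaussians, the Hellinger affinity is explicit:
\[
1-d_H^2/2=\sqrt{\tfrac{2\sigma\sigma_0}{\sigma^2+\sigma_0^2}}\;\exp\!\Bigl(-\tfrac{(f-f_0)^2}{4(\sigma^2+\sigma_0^2)}\Bigr).
\]

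Third, we replace $f$ by the clipped version $\operatorname{clip}_F\circ f$. Since $|f_0|\le F$, clipping is a contraction towards $[-F,F]$, so $|\operatorname{clip}_F f-f_0|\le |f-f_0|$ pointwise, and the clipped difference is bounded by $2F$.

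On the range $u:=|\operatorname{clip}_F f-f_0|\le 2F$ with $\sigma,\sigma_0\in[\underline\sigma,\overline\sigma]$, we use elementary inequalities. The map $u\mapsto 1-e^{-cu^2}$ is bounded below by $c' u^2$ on $[0,2F]$. The term $1-\sqrt{2\sigma\sigma_0/(\sigma^2+\sigma_0^2)}$ is bounded below by a constant times $(\sigma-\sigma_0)^2$. Writing the affinity as a product $ab$ with $a,b\le 1$, we have $1-ab\ge \tfrac12[(1-a)+(1-b)]$, because $1-ab\ge \max(1-a,1-b)$. Combining these, and using monotonicity of the affinity in $|f-f_0|$, we obtain
\[
d_H^2\bigl(N(f(x),\sigma^2),N(f_0(x),\sigma_0^2)\bigr)\gtrsim (\operatorname{clip}_F f(x)-f_0(x))^2+(\sigma-\sigma_0)^2.
\]
Integrating against $P_X$ gives $\|\operatorname{clip}_F\circ f-f_0\|_{L^2(P_X)}^2+|\sigma-\sigma_0|^2\lesssim d_{n,H}^2$. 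The event in the corollary with $\tilde M_n$ is therefore contained in $\{d_{n,H}>c\tilde M_n\epsilon_n\}$, and the theorem with $M_n=c\tilde M_n$ concludes the argument.

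The main obstacle is the second step, namely interpreting $d_{n,H}$ as a population-averaged quantity under random design rather than a conditional one. If the theorem's $d_{n,H}$ is taken conditional on the observed design points $x_i$, we instead obtain an empirical $L^2(P_n)$ bound. In that case we must transfer it to $L^2(P_X)$ uniformly over the sieve. This transfer can be done with a Bernstein-type inequality for the class of clipped differences, which is bounded by $2F$, combined with the entropy bound \eqref{eq:EntropyCond}, at the cost of at most logarithmic factors already present in $\epsilon_n$.
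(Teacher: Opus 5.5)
Your proposal is correct and follows essentially the same route as the paper. The paper likewise bounds $\|\operatorname{clip}_F\circ f-f_0\|_{L^2(P_X)}^2+2|\sigma-\sigma_0|^2$ by a constant times $d_{n,H}^2(p_{\operatorname{clip}_F\circ f,\sigma},p_{f_0,\sigma_0})\le d_{n,H}^2(p_{f,\sigma},p_{f_0,\sigma_0})$ and then applies Theorem~\ref{eq:main_theorem} with $M_n\propto\tilde M_n$; it cites Lemma~B.1 of \citet{xie2020adaptive} where you derive the bound directly from the Gaussian Hellinger affinity.

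Your check that the theorem survives truncating the $\sigma^2$ prior is a step the paper leaves implicit. Since $\sigma_0$ may sit at an endpoint, the two-sided neighbourhood you use is needed in place of the paper's one-sided $[\sigma_0^2,(1+\epsilon_n^2)\sigma_0^2]$. Your fallback for an empirical $d_{n,H}$ is unnecessary: for i.i.d.\ random-design pairs, $d_{n,H}$ is the population Hellinger distance, exactly as the paper's $L^2(P_X)$-based entropy and Kullback--Leibler bounds presuppose.
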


\begin{proof}
For any $f: \bbR\ \to \bbR$, define $f^F := \operatorname{clip}_F \circ f$.
Then $\| f^F \|_{L^\infty} \leq F$ and, by assumption, $\| f_0 \|_{L^\infty} \leq F$.
Let $C_F := F^2 + 8 \overline{\sigma}^2$.
By \textbf{Lemma B.1} of \citet{xie2020adaptive}, for $\sigma,\sigma_0 \in [\underline{\sigma},\overline{\sigma}]$,
\begin{equation*}
    \frac{1}{C_F}
    \left(
        \|f^F - f_0\|_{L^2(P_X)}^2 + 2|\sigma - \sigma_0|^2
    \right)
    \le
    d_{n,H}^2(p_{f^F,\sigma}, p_{f_0,\sigma_0})
    \le
    d_{n,H}^2(p_{f,\sigma}, p_{f_0,\sigma_0})
    .
\end{equation*}
Therefore,
\begin{equation*}
    \|f^F - f_0\|_{L^2(P_X)} + |\sigma-\sigma_0|
    \leq
    \sqrt{\frac{3C_F}{2}}\,
    d_{n,H}(p_{f,\sigma},p_{f_0,\sigma_0}),
\end{equation*}
where we used $a + b \leq \sqrt{\frac{3}{2}} \cdot \sqrt{a^2 + 2 b^2}$ for $a , b > 0$.
Now let $\tilde{M}_n \to \infty$ be arbitrary and define $M_n := \sqrt{\frac{2}{3C_F}}\, \tilde{M}_n$.
Then
\begin{align*}
    &
    \Pi\!\left(
    (f,\sigma) :
    \| \operatorname{clip}_F \circ f-f_0\|_{L^2(P_X)} + |\sigma-\sigma_0|
    > \tilde{M}_n \epsilon_n
    \,\middle|\,
    \mathbf{D}^{(n)}
    \right) \\
    &\le
    \Pi\!\left(
    (f,\sigma) :
    d_{n,H}(p_{f,\sigma}, p_{f_0,\sigma_0})
    > M_n \epsilon_n
    \,\middle|\,
    \mathbf{D}^{(n)}
    \right).
\end{align*}
The right-hand side converges to zero in $\mathbb{P}_0^{(n)}$-probability by \textbf{Theorem~\ref{eq:main_theorem}}.
\end{proof}

\section{Simulation Study}
\label{sec:simulation}

In this section, we present simulation results that support the posterior contraction property of LABS established in Section~\ref{subsec:posterior_contraction}. The simulations are based on the test functions introduced in \citet{donoho1994ideal}, namely the Blocks function $f_1$, the Bumps function $f_2$, the Heavisine function $f_3$, and the Doppler function $f_4$.
These functions exhibit patterns such as local discontinuities or abrupt changes in the derivative at specific locations, and are therefore suitable examples for empirically validating the theoretical results of this work.

\begin{enumerate}
    \item Blocks
        \begin{align*}
            f_1(x) &= \sum_j h_j \, K (x - t_j), \quad K(t) = \frac{1 + \mathrm{sgn}(t)}{2}, \\
            (t_j) &= (0.1, 0.13, 0.15, 0.23, 0.25, 0.40, 0.44, 0.65, 0.76, 0.78, 0.81), \\
            (h_j) &= (4, -5, 3, -4, 5, -4.2, 2.1, 4.3, -3.1, 2.1, -4.2).
        \end{align*}
    \item Bumps
        \begin{align*}
            f_2(x) &= \sum_j h_j \, K \left( \frac{x - t_j}{w_j} \right), \quad K(t) = (1 + |t|)^{-4}, \\
            (t_j) &= (0.1, 0.13, 0.15, 0.23, 0.25, 0.40, 0.44, 0.65, 0.76, 0.78, 0.81), \\
            (h_j) &= (4, 5, 3, 4, 5, 4.2, 2.1, 4.3, 3.1, 2.1, 4.2), \\
            (w_j) &= (0.005, 0.005, 0.006, 0.01, 0.01, 0.03, 0.01, 0.01, 0.005, 0.008, 0.005).
        \end{align*}
    \item HeaviSine
        \begin{equation*}
            f_3 (x) = 4 \sin(4 \pi x) - \mathrm{sgn}(x - 0.3) - \mathrm{sgn}(0.72 - x).
        \end{equation*}
    \item Doppler
        \begin{equation*}
            f_4(x) = \sqrt{x (1-x)} \, \sin\left\{ \frac{2.1 \pi }{x + 0.05} \right\}.
        \end{equation*}
\end{enumerate}
Here, Blocks, Bumps and Heavisine are bounded variation functions, and $f_1, f_2, f_3 \in B^1_{1, \infty}([0, 1])$. In contrast, the Doppler function $f_4$ exhibits spatial inhomogeneity with rapid oscillations near the origin. Consequently, $f_4$ is typically considered to lie in $B^s_{p, \infty}([0, 1])$ with $s < 1$ for $p \in [1, \infty]$. Note that all four functions fall within our theoretical guarantees.
\begin{figure}[tbp] 
     \centering
     \begin{subfigure}[b]{0.48\textwidth}
         \centering
         \includegraphics[width=0.8\textwidth]{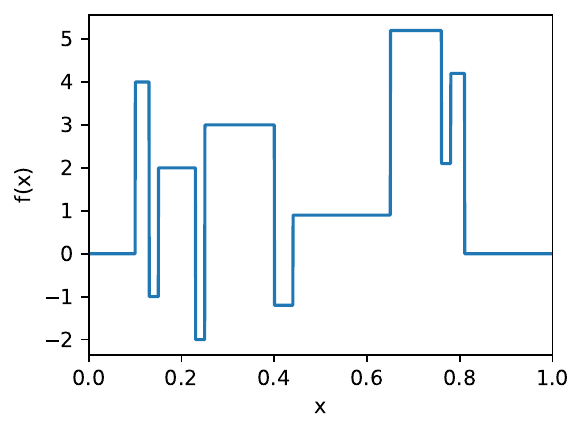}
         \caption{$f_1$: Blocks}
         \label{fig:blocks}
     \end{subfigure}
     \hfill
     \begin{subfigure}[b]{0.48\textwidth}
         \centering
         \includegraphics[width=0.8\textwidth]{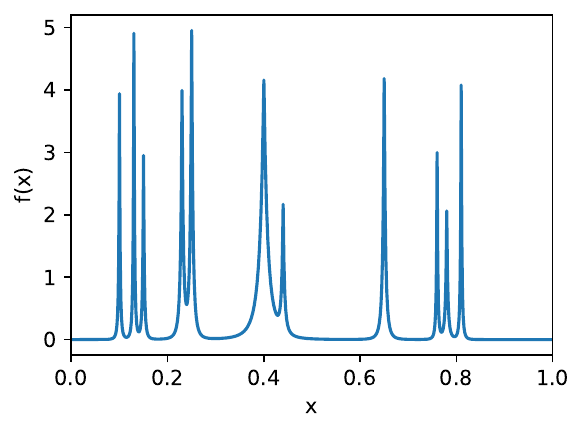}
         \caption{$f_2$: Bumps}
         \label{fig:bumps}
     \end{subfigure}

     \vspace{10pt}

     \begin{subfigure}[b]{0.48\textwidth}
         \centering
         \includegraphics[width=0.8\textwidth]{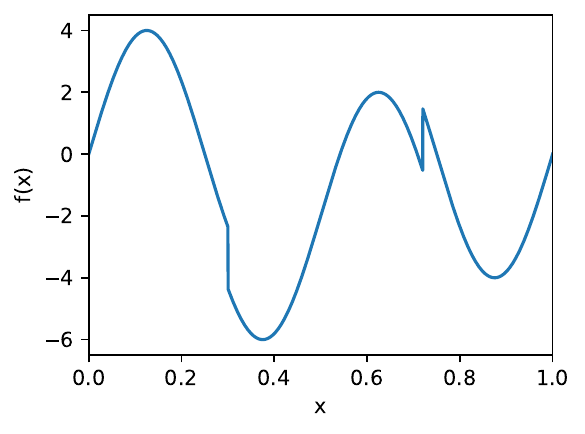}
         \caption{$f_3$: HeaviSine}
         \label{fig:heavisine}
     \end{subfigure}
     \hfill
     \begin{subfigure}[b]{0.48\textwidth}
         \centering
         \includegraphics[width=0.8\textwidth]{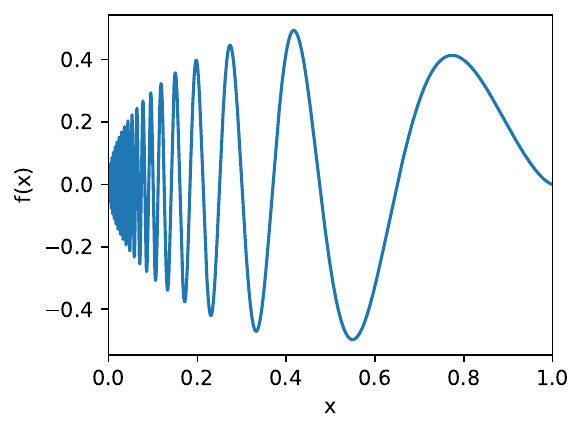}
         \caption{$f_4$: Doppler}
         \label{fig:doppler}
     \end{subfigure}
     
     \caption{Four test functions from \citet{donoho1994ideal} used in the simulation.}
     \label{fig:test_functions}
\end{figure}
For each function, we generated 100 simulated datasets $(x_i, y_i)_{i=1}^n$ with $n = 128, 1024, \allowbreak 8192$. The data were generated under a random design:
\begin{equation*}
        x_i \stackrel{iid}{\sim} U([0,1]), \quad    
        y_i \mid x_i \stackrel{ind}{\sim} N(f_j(x_i), \sigma_0^2), \quad j = 1, \ldots, 4.
\end{equation*}
The noise level $\sigma_0$ was chosen according to three root signal-to-noise ratio (RSNR) settings, RSNR $= 3, 5, 10$:
\begin{equation*}
        \mathrm{RSNR} := \sqrt{\frac{\int_{\mathcal{X}} (f(x) - \bar{f})^2}{\sigma^2}},
\end{equation*}
where $\bar{f} := \int_0^1 f(x) \, dx$. We standardised the observations $y_i$ prior to analysis. Thus, the corresponding values of $\sigma_0$ for each RSNR are given by the reciprocal of RSNR, that is, $\sigma_0 = \frac{1}{3}, \frac{1}{5}, \frac{1}{10}$.

We then compared LABS with the following competing methods on the simulated datasets:
\begin{enumerate}
    \item (EBW) Empirical Bayes selection of wavelet thresholds with a Laplace prior and Daubechies ``la8'' wavelets \citep{johnstone2005empirical}.
    \item (GP) Gaussian process regression with an exponential kernel \citep{williams2006gaussian}.
    \item (LOESS) Local polynomial regression of degree 1 with automatic smoothing parameter selection \citep{cleveland1979robust}.
\end{enumerate}

All methods were implemented in the \texttt{R} language \citep{rcoreteam2025r}. We used the \texttt{Ebayesthresh} package \citep{silverman2005ebayesthresh} for EBW, the \texttt{kernlab} package \citep{karatzoglou2024kenlab} for GP, and the \texttt{fANCOVA} package \citep{wang202fancova} for LOESS.

The hyperparameters of LABS were chosen to satisfy the conditions guaranteeing posterior contraction in Section~\ref{subsec:posterior_contraction}. The multiplicative constants for these hyperparameters were tuned appropriately and are summarised in \hyperref[tab:hparamsTuning]{Table~\ref{tab:hparamsTuning}}. Note that the multiplicative constants were allowed to vary depending on the simulation settings. For the gamma prior in \eqref{eq:GamPriorCond}, we set $b_n = e^{C_b (\log n)^2 }$, while for the normal prior in \eqref{eq:NormPriorCond}, we used $\phi_n = e^{C_\phi ( \log n )^2}$. The knot spacing in \eqref{eq:UnifPriorCond} was set to $\delta_n = e^{- (\log n)^2}$.

\begin{table}[tbp]
    \centering
        \begin{tabular}{ c  c  c  c  c  c }
        \toprule
             $n$ & $S$ & $a$ & $b_n$ & $\phi_n$ & $\delta_n$    \\ 
        \midrule
             $128$ & $\{ 1, 2 \}$ & $1$ & $1.002357$ & $7.278045$ & $5.966965 \times 10^{-11}$  \\
             $1024$ & $\{ 1, 2 \}$ & $1$ & $1.004816$ & $10.397208$ & $1.362043 \times 10^{-21}$    \\
             $8192$ & $\{ 1, 2 \}$ & $1$ & $1.008153$ & $13.516370$ & $5.454844 \times 10^{-36}$  \\
        \bottomrule
        \end{tabular}
        \caption{\label{tab:hparamsTuning} Hyperparameter settings of the LABS model used in the simulation study. The sample sizes are $n = 128, 1024, 8192$
        The B-spline degree set is $S = \{1, 2 \}$, corresponding to linear (degree $1$) and quadratic (degree $2$) spline kernels.}
\end{table}
To compare the four methods, we used the mean squared error (MSE) between the true function values $f(x_i)$ and the fitted values $\hat{f}(x_i)$ as the performance metric. For ease of visual comparison, we considered the logarithm of MSE.

\begin{equation*}
        \mathrm{MSE} = n^{-1} \sum_{i=1}^n (f_0(x_i) - \hat{f}(x_i))^2.
\end{equation*}

\hyperref[fig:boxplot_n128]{Figure~2}, \hyperref[fig:boxplot_n1024]{Figure~3}, and \hyperref[fig:boxplot_n8192]{Figure~4} display boxplots of the MSE values for $n = 128, 1024, 8192$, respectively, based on 100 simulation replicates. The LABS hyperparameters were set according to \hyperref[tab:hparamsTuning]{Table~\ref{tab:hparamsTuning}}. Under the specification $\bxi_{k, l} \sim U(\mathcal{X}^{k +2}(\delta_n))$ in \eqref{eq:UnifPriorCond}, the knot domain was taken to be $\mathcal{X} = [0, 1]$.

Overall, LABS outperforms the competing methods in most scenarios. In particular, in the low-sample-size setting, LABS yields the smallest prediction error among all methods. \hyperref[fig:boxplot_n128]{Figure~2} shows that for $n=128$, LABS achieves the lowest log MSE across all test functions and RSNR levels.

\begin{figure}[tbp]
        \centering
        \includegraphics[width=\linewidth]{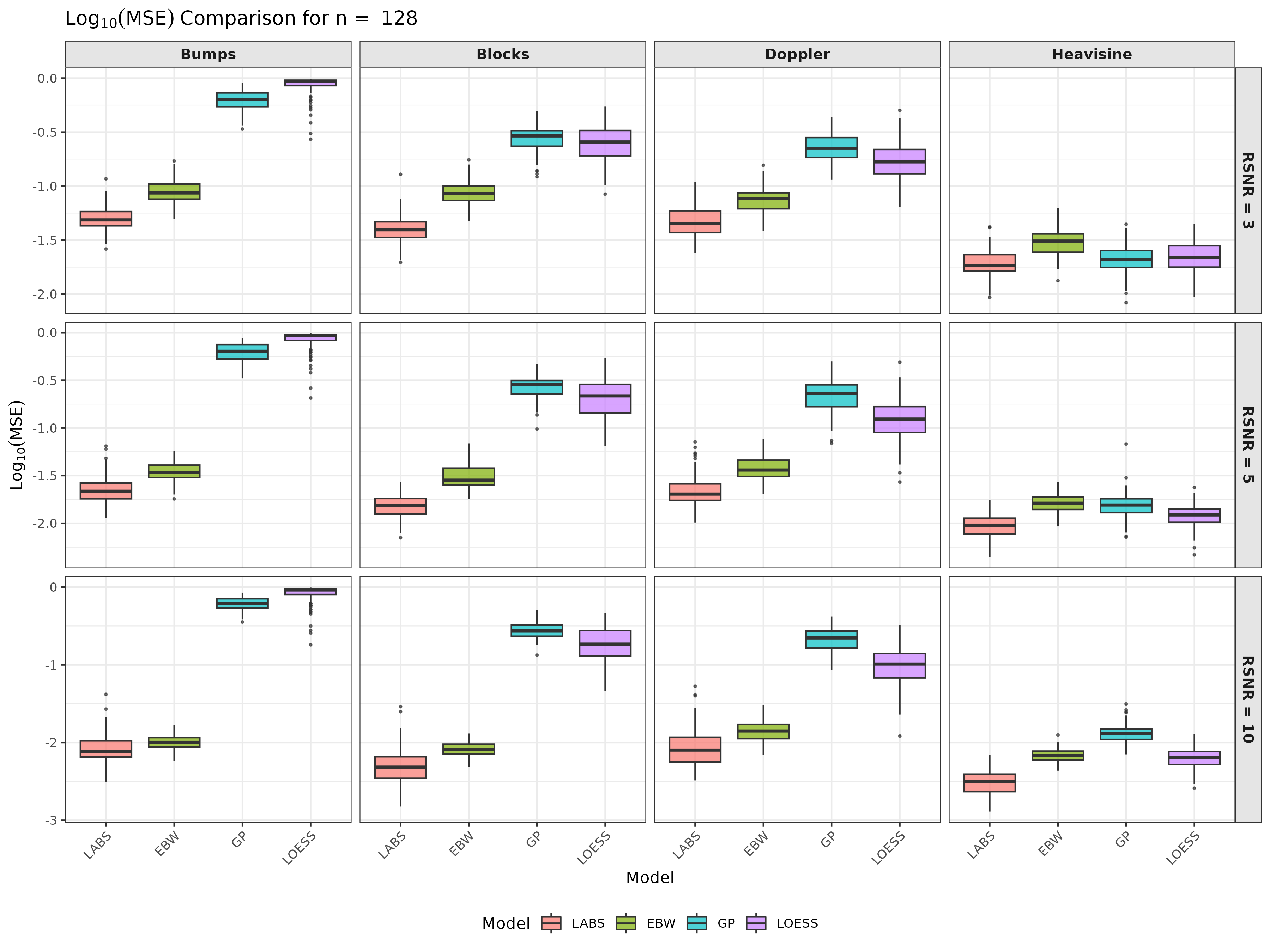}
        \caption{MSE boxplots for $n = 128$ based on 100 simulation replicates.}
        \label{fig:boxplot_n128}
\end{figure}

For $n=1024$ and $n=8192$, as shown in \hyperref[fig:boxplot_n1024]{Figure~3} and \hyperref[fig:boxplot_n8192]{Figure~4}, LABS also performs well overall, although there exist specific scenarios in which EBW slightly outperforms LABS.

In high-noise scenarios, LABS tends to perform better than EBW. For instance, in the first row of \hyperref[fig:boxplot_n1024]{Figure~3}, corresponding to RSNR $= 3$ and $\sigma \approx 0.333$, LABS shows superior performance across all test functions. However, in the second row with RSNR $= 5$ and $\sigma = 0.2$, LABS is slightly inferior to EBW for the Bumps and Doppler test functions.

Furthermore, the results suggest that LABS provides accurate estimates for functions with diverse discontinuity patterns. These include functions with abrupt jumps at specific points: $(t_j)$ for Blocks, and $x = 0.3, 0.72$ for Heavisine. This behaviour is clearly observed in the Blocks and Heavisine tests in \hyperref[fig:boxplot_n1024]{Figure~3} and \hyperref[fig:boxplot_n8192]{Figure~4}, where LABS generally attains lower MSE than the competing methods across all RSNR settings.
    
\begin{figure}[tbp]
        \centering
        \includegraphics[width=\linewidth]{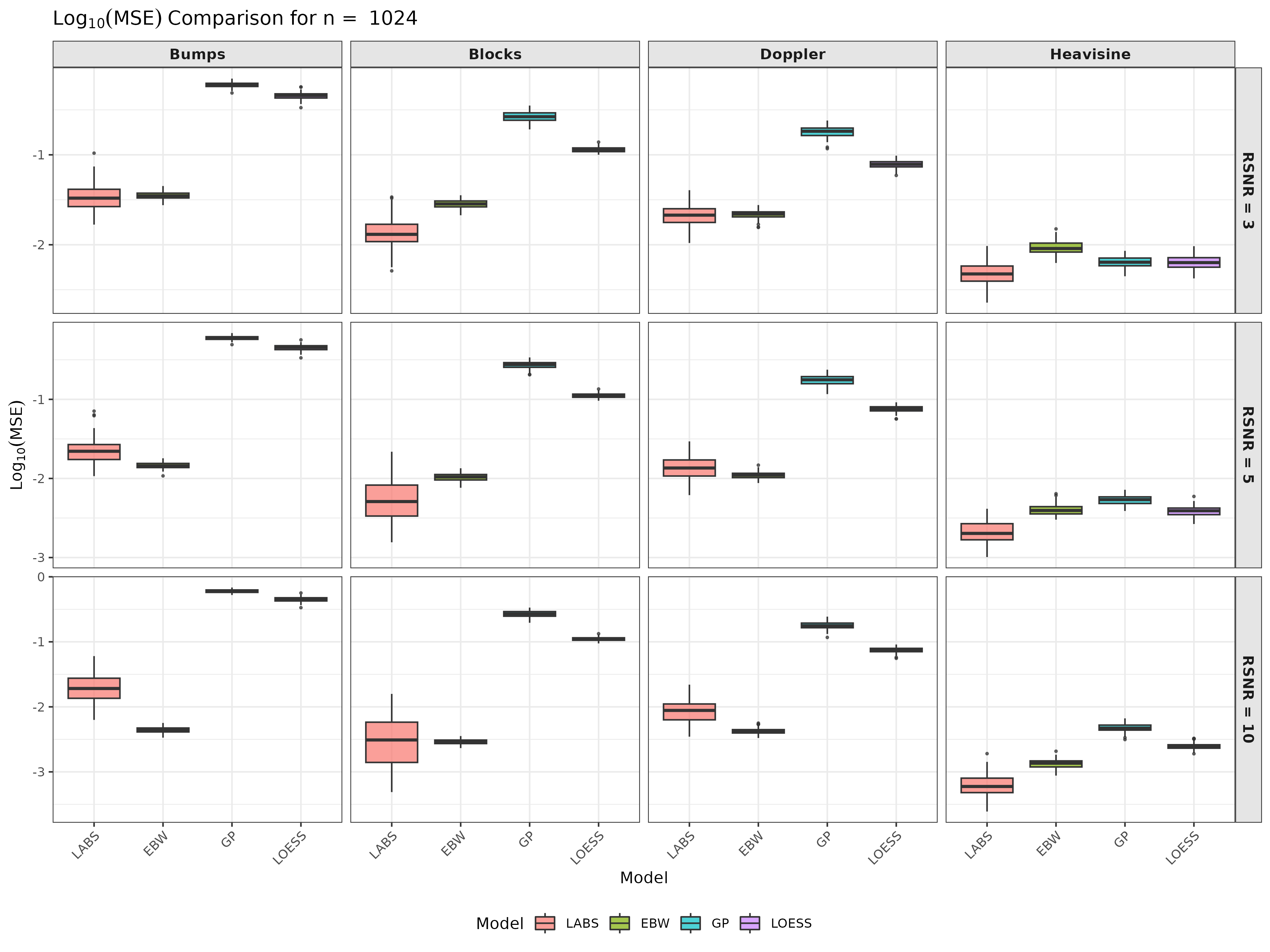}
        \caption{MSE boxplots for $n = 1024$ based on 100 simulation replicates.}
        \label{fig:boxplot_n1024}
\end{figure}
    
\begin{figure}[tbp]
        \centering
        \includegraphics[width=\linewidth]{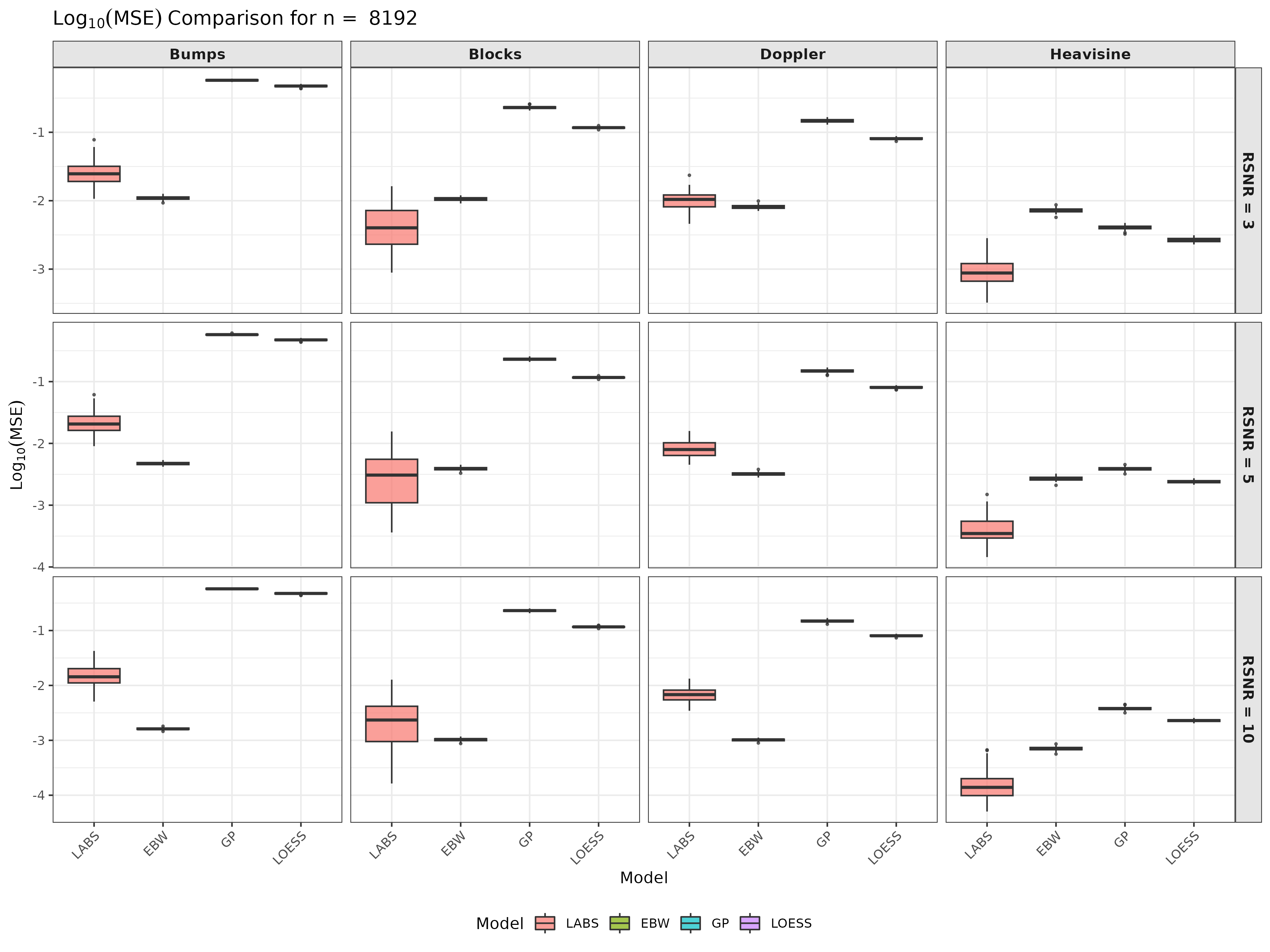}
        \caption{MSE boxplots for $n = 8192$ based on 100 simulation replicates.}
        \label{fig:boxplot_n8192}
\end{figure}

\section{Concluding Remarks}
\label{sec:concluding}

In this study, we investigated posterior contraction for a fully Bayesian regression model based on random B-splines. The true regression function was assumed to belong to a Besov space, which includes functions that may exhibit irregular or nonsmooth behaviour. The proposed model employs B-spline bases of multiple orders, allowing local adaptation of smoothness. This property enables the model to effectively estimate functions in Besov spaces that may contain discontinuities or spatially heterogeneous features. 

Previous work on Bayesian regression models using random B-splines for estimating functions in Besov spaces has been limited, and this study aimed to fill that gap. We established that the proposed LABS model achieves nearly optimal adaptive posterior contraction rates—up to logarithmic factors—over Besov spaces with smoothness $s > \left( 1/p - 1/2 \right)_+$.
Furthermore, we proved that the adaptive contraction holds under the root average squared Hellinger distance, and that the same rate applies under the $L^2$ loss when a uniform boundedness assumption is imposed. Additionally, empirical experiments confirmed that the theoretical convergence behaviour is consistently observed in practice. A natural direction for future work is to extend the present theory to multivariate regression settings and analyze the corresponding posterior contraction rates.

\appendix
\addcontentsline{toc}{section}{Appendix}

\section*{Appendix}

\section{Preliminaries}

This appendix provides the proof of the main theorem stated in the main paper.
To avoid repetition, we introduce only the definitions and notation used exclusively in this appendix. All other definitions and notation are as given in Section~\ref{sec:preliminaries} of the main paper.

\subsection{Definitions and Notation}

Let $\calX := [-A, 1 + A],$ for some $A > 0$. For a real-valued function $f$ defined on $\calX$, let $[x_1, \ldots, x_{r + 1}]f$ denote the $r$-th order divided difference. That is,
$[x_1, \ldots, x_{r + 1}]f = ([x_2, \ldots, x_{r + 1}]f - [x_1,\ldots, x_{r}]f) / (x_{r + 1} - x_1)$.
We also define $[x_i]f = f(x_i)$. If $x_1 = \cdots = x_{r + 1}$, then $[x_1, \ldots, x_{r + 1}]f = f^{(r)}(x_1) / r!$, which corresponds the $r$th-derivative of $f$ at $x_1$.

\subsection{Sieve Construction}
Here, we present the construction of a sieve which will be used in the proof of the main theorem. The sieve $\Theta_n$ is constructed as follows.  
For $\boldsymbol{J} = (J_k)_{k \in S} \in \mathbb{N}_0^{|S|}$, $B > 0$, and $\delta > 0$, define
\begin{align}   \label{eq:Sieve}
\begin{split}
    \calC (\bJ, B) &:= \{ \bbeta = (\beta_{k,l}) : \| \bbeta \|_\infty \leq B \},    \\
    \calX_{\xi} (\bJ, \delta) &:= \{ \bxi = (\bxi_{k,l}) : m (\bxi_{k,l}) \geq \delta, \; \bxi_{k,l} \in \calX^{k + 2} \; \text{for all} \; k,l \},   \\
    \mathcal{F}(\bJ,B,\delta)
    &:= \Big\{ f= \sum_{k \in S} \sum_{l = 1}^{J_k} \beta_{k,l} B_k ( \, \cdot \, ; \bxi_{k,l}) : \bbeta \in \calC(\bJ, B), \; \bxi \in \calX_\xi (\bJ, \delta) \Big\}.
\end{split}
\end{align}
Here, $m (\bfz) = \min_{1 \leq j \leq k + 1} |z_{(j + 1)} - z_{(j)}|$ for $\bfz \in \calX^{k + 2}$ and $z_{(j)}$ is the $j$-th smallest component of $\bfz$. For $J \in \mathbb{N}_0$, define
\begin{align}   \label{eq:Sieve_param}
\begin{split}
    \mathcal{F}(J,B,\delta) &:= \bigcup_{\boldsymbol{J}:\ \sum_{k\in S}J_k \le J} \mathcal{F}(\boldsymbol{J},B,\delta), \\
    \mathcal{F}_n &:= \mathcal{F}(J_n, B_n, \delta_n), \\
    \Theta_n &:= \mathcal{F}_n \times [\underline{\sigma}_n, \overline{\sigma}_n],
\end{split}
\end{align}
where the sequences are chosen as
\begin{equation}\label{eq:SieveCond}
\begin{matrix}
    J_n := \lceil C_J \, n^{\frac{1}{2s+1}} (\log n)^2 \rceil,
    & B_n := C_B \, \phi_n \, n^{\frac{1}{2(2s + 1)}} (\log n)^2,   \\
    \delta_n := \exp\{-C_\delta(\log n)^2\},
    &\underline\sigma_n := C_{\underline\sigma} \, n^{- \frac{1}{2(2s + 1)}} (\log n )^{-2}, \\
    \overline\sigma_n := \exp{C_{\overline\sigma} \, n^{\frac{1}{2s + 1}} (\log n )^{4}}
\end{matrix}
\end{equation}
for constants $C_J,C_B,C_\delta, C_{\underline\sigma}, C_{\overline\sigma}>0$,
which will be specified later in the proof of Proposition~\ref{eq:tail}.
The choices in \eqref{eq:SieveCond} are designed to ensure that the three sufficient
conditions in \cite{ghosal2007convergence} are satisfied simultaneously.
The model space is defined as
\begin{align*}
    \mathcal{F} &= \bigcup_{J,B,\delta} \mathcal{F}(J,B,\delta), \\
    \Theta &= \mathcal{F} \times (0,\infty).
\end{align*}

\section{Auxiliary Lemmas}

The following lemma is adapted from \textbf{Lemma 1} of \citet{belitser2014adaptive}, where the proof is given for functions $f:[0,1] \rightarrow \bbR$. Here we modify it to the LABS model setting of the present paper, namely $f:\calX \rightarrow \bbR$. Although the proof is essentially identical, we include it for the sake of completeness.

\begin{lemma} \label{lem:divdiff}
	Let $(\xi_1, \ldots, \xi_{r + 1}) \in \calX^{r + 1}$ with $r \geq 2$, and let $i \in \{1, \ldots, r \}$. Suppose that $\xi_{i + 1} - \xi_{i} = 0$ and that $\xi_{\nu + 1} - \xi_{\nu } \geq \delta > 0$ for all $\nu \in \{1, \ldots, i - 1, i + 1, \ldots, r \}$. For a fixed $x \in \calX$, define $f(y) = (x - y)_+^{q - 1}$ for $y \in \calX$, with $q \geq 2$. Then, for all $x \neq \xi_i$ and $\delta \leq \frac{2(1 + 2A)}{q - 1}$, the following holds:
	\begin{equation*}
		| [\xi_1, \ldots, \xi_{r + 1} ] f | \leq \frac{4 (1 + 2A)^{q-1}}{\delta^r}.
	\end{equation*}
\end{lemma}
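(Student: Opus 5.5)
The plan is an induction on the order of the divided difference, driven by the recursion
$[\xi_1,\ldots,\xi_{m+1}]f = ([\xi_2,\ldots,\xi_{m+1}]f-[\xi_1,\ldots,\xi_m]f)/(\xi_{m+1}-\xi_1)$.
Throughout, the knots are taken nondecreasing, as in the B-spline setting, so that the hypothesis concerns consecutive gaps. The point of the induction is to peel off endpoint knots while always keeping the denominator $\xi_{m+1}-\xi_1$ bounded below by a multiple of $\delta$.

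I would track two quantities, writing $K:=(1+2A)^{q-1}$.
\begin{itemize}
\item $E_m$ is the largest value of $|[\eta_1,\ldots,\eta_{m+1}]f|$ over consecutive subsequences $(\eta_j)$ of $(\xi_1,\ldots,\xi_{r+1})$ of length $m+1$ that do \emph{not} contain both $\xi_i$ and $\xi_{i+1}$. All gaps of such a block are $\ge\delta$.
\item $D_m$ is the same maximum over length-$(m+1)$ blocks that \emph{do} contain the repeated pair.
\end{itemize}

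\textbf{Distinct blocks.} Since $0\le f\le K$ on $\calX$ (because $|x-y|\le 1+2A$ there), we have $E_0\le K$. A block of length $m+1$ with all gaps $\ge\delta$ spans at least $m\delta$, so the recursion gives $E_m\le 2E_{m-1}/(m\delta)$. Hence
$E_m\le 2^m K/(m!\,\delta^m)$.

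\textbf{Base case with the repeat.} The only block of length $2$ containing the pair is $(\xi_i,\xi_i)$. Its divided difference is $f'(\xi_i)=-(q-1)(x-\xi_i)_+^{q-2}$. This is well defined precisely because $x\neq\xi_i$, which matters when $q=2$ since $f$ then has a kink at $y=x$. So $D_1\le (q-1)(1+2A)^{q-2}$. This is where the assumption $\delta\le 2(1+2A)/(q-1)$ enters: it converts the bound into $D_1\le 2K/\delta$, which has the same form as the target.

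\textbf{Inductive step with the repeat.} For $m\ge2$, a block of length $m+1$ containing the pair has $m$ gaps, of which only one is zero. Its span is therefore at least $(m-1)\delta\ge\delta$. Dropping the first or the last knot yields length-$m$ blocks that either still contain the pair or do not. In the case where the pair sits at an end, one of the two sub-blocks is of the distinct type. In every case the recursion gives
\[
D_m\le \frac{D_{m-1}+\max(D_{m-1},E_{m-1})}{(m-1)\delta}\le \frac{D_{m-1}+E_{m-1}}{(m-1)\delta}+\frac{D_{m-1}}{(m-1)\delta}.
\]
I would state this more carefully as $D_m\le 2\max(D_{m-1},E_{m-1})/((m-1)\delta)$.

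\textbf{Solving the recursion.} Write $D_m\le c_mK/\delta^m$. Using $c_1=2$ and $E_{m-1}\le 2^{m-1}K/((m-1)!\,\delta^{m-1})$, we get
$c_m\le 2\max\{c_{m-1},2^{m-1}/(m-1)!\}/(m-1)$.
This gives $c_2\le 4$, $c_3\le 4$, and $c_m$ decreasing thereafter, since $2^{m-1}/(m-1)!\le 2$ and the factor $2/(m-1)\le 1$ for $m\ge3$. Taking $m=r$ yields $|[\xi_1,\ldots,\xi_{r+1}]f|\le 4K/\delta^r$, which is the claimed bound.

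\textbf{Main obstacle.} The delicate step is bookkeeping the constant so that it stays at $4$ rather than growing like $2^r$. This hinges on using the span lower bound $(m-1)\delta$, not merely $\delta$, in the repeated case, together with the $1/m!$ decay of $E_m$. If $c_2$ or $c_3$ came out slightly above $4$ under a naive max-bound, I would instead bound the two terms of the recursion separately: use $D_{m-1}+E_{m-1}$ when the pair is at an end, and $2D_{m-1}$ only when it is interior. For $m=2$ the pair is necessarily at an end, which gives $c_2\le 2+2=4$ directly.
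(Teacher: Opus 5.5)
Your proof is correct and follows the paper's argument. Both use the same two first-order ingredients: the derivative bound $|f'(\xi_i)|\le (q-1)(1+2A)^{q-2}\le 2(1+2A)^{q-1}/\delta$ at the double knot (this is where $x\neq\xi_i$ and the restriction on $\delta$ enter), and the crude bound $2(1+2A)^{q-1}/\delta$ across gaps of size at least $\delta$. Both then push these through the divided-difference recursion using span lower bounds of order $(j-1)\delta$.

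The only difference is how that last step is done. The paper cites Theorem 2.56 of Schumaker to obtain $2^r(1+2A)^{q-1}/((r-1)!\,\delta^r)\le 4(1+2A)^{q-1}/\delta^r$ in one step. You carry out the induction by hand, separating blocks with and without the repeated knot, and reach the same constant $4$.
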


\begin{proof}
	We have $| f^\prime(y)| = (q - 1)(x - y)^{q-2}_+ \leq (q - 1)(1 + 2A)^{q-2} \leq 2(1 + 2A)^{q - 1} / \delta$ for $x \neq y$, since $x, y \in \calX = [-A, 1 + A]$, $q \geq 2$, and $\delta \leq \frac{2(1 + 2A)}{q - 1}$. Now, when $\nu = i - 1$, we have
	$|[\xi_{\nu + 1}, \xi_{\nu + 2}] f | = |f^\prime(\xi_{\nu + 1}) | \leq 2 (1 + 2A)^{q - 1} /\delta$.
	On the other hand, when $\nu \neq i - 1$, we obtain
	$| [\xi_{\nu + 1}, \xi_{\nu + 2}]f | = |f(\xi_{\nu + 2}) - f(\xi_{\nu + 1}) | / |\xi_{\nu + 2} - \xi_{\nu + 1} | \leq 2(1 + 2A)^{q-1} / \delta$.
	Therefore, $|[\xi_{\nu + 1}, \xi_{\nu + 2} ]f | \leq 2(1 + 2A)^{q - 1} / \delta$ for all $x \neq \xi_{i}$. \\

	Next, we apply \textbf{Theorem 2.56} of \citet{Schumaker2007spline}. For $j = 2, \ldots, r$, define $\gamma_j := \min_{\nu = 1, \ldots, k+ 2 - j} |\xi_{\nu + j} - \xi_{\nu}| \geq (j - 1)\delta$. Then, by the theorem,
	\begin{equation*}
		|[\xi_1, \ldots, \xi_{r + 1}]f| \leq \sum_{\nu=0}^{r - 1} {\binom{r - 1}{\nu}} \frac{[\xi_{\nu + 1}, \xi_{\nu + 2}]f}{\gamma_2 \cdots \gamma_{k + 1}} \leq \frac{2^r (1 + 2A)^{q-1}}{(r - 1)! \delta^{r}} \leq \frac{4 (1 + 2A)^{q-1}}{\delta^r}.
	\end{equation*}
	This completes the proof.
\end{proof}

\begin{lemma}   \label{eq:knot_complexity}
	Consider two functions
    \begin{equation*}
    f_{\bbeta, \bxi} = \sum_{k \in S} \sum_{l = 1}^{J_k} \beta_{k,l} B_k (\cdot \, ; \, \bxi_{k,l}), \quad f_{\bbeta^\prime, \bxi^\prime} = \sum_{k \in S} \sum_{l = 1}^{J_k} \beta_{k,l}^\prime B_k(\cdot \, ; \, \bxi_{k,l}^\prime ).
    \end{equation*}
    For $B > 0$ and $\delta > 0$, suppose that $\bbeta \in \bbR^{\sum_{k \in S} J_k}$ satisfies $\|\bbeta\|_{\infty} \leq B$, and that $\bxi, \bxi^\prime \in \calX^{\sum_{k \in S} J_k(k + 2)}$ satisfy $m(\bxi_{k,l}) \geq \delta$ and $m(\bxi^\prime_{k,l}) \geq \delta$ for all $k,l$. Then,
	\begin{equation*}
		\| f_{\bbeta, \bxi} - f_{\bbeta, \bxi^\prime} \|_{L^{\infty}(\calX)} \leq L (B, \bJ, \delta; \overline{k} ) \| \bxi - \bxi^\prime \|_\infty
	\end{equation*}
	holds for all $\delta \leq 2(1 + 2A) / \overline{k}$, where
	$L (B, \bJ, \delta; \overline{k}) := 4 |S| (1 + 2A)^{\overline{k}} (\overline{k} + 2) B \big( \textstyle\sum_{k \in S} J_k \big) (\delta \wedge 1)^{-(\overline{k} + 1) }$
	and $\overline{k} = \textstyle\max_{k \in S} k.$
\end{lemma}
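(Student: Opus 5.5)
The plan is to reduce the claim to a Lipschitz bound for a single B-spline in its knots, and then sum over the terms. By the triangle inequality and $\|\bbeta\|_\infty \le B$,
\begin{equation*}
\| f_{\bbeta,\bxi} - f_{\bbeta,\bxi'} \|_{L^\infty(\calX)} \le B \sum_{k\in S}\sum_{l=1}^{J_k} \| B_k(\cdot;\bxi_{k,l}) - B_k(\cdot;\bxi'_{k,l})\|_{L^\infty(\calX)} .
\end{equation*}
It therefore suffices to show, for each $k \in S$ and each pair of knot vectors $\bxi_k, \bxi_k' \in \calX^{k+2}$ with minimum spacing at least $\delta$,
\begin{equation*}
\|B_k(\cdot;\bxi_k) - B_k(\cdot;\bxi'_k)\|_{L^\infty(\calX)} \le 4(1+2A)^{\overline k}(k+2)(\delta\wedge 1)^{-(\overline k+1)}\|\bxi_k-\bxi_k'\|_\infty .
\end{equation*}
Summing over at most $\sum_k J_k$ terms, and absorbing the count over degrees into the factor $|S|$, then gives the constant $L(B,\bJ,\delta;\overline k)$.

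For the single-spline bound I would use the divided-difference representation
\begin{equation*}
B_k(x;\bxi) = (\xi_{k+2}-\xi_1)\,[\xi_1,\ldots,\xi_{k+2}]\,(\cdot - x)_+^{k},
\end{equation*}
or the equivalent form with $(x-\cdot)_+^k$ as in Lemma~\ref{lem:divdiff}, with $q = k+1$. I would move the knots one coordinate at a time. Write $\bxi^{(j)}$ for the vector whose first $j$ knots come from $\bxi'$ and whose remaining knots come from $\bxi$. Telescoping reduces the problem to bounding the change when a single knot $\xi_j$ moves to $\xi_j'$; there are $k+2$ such steps, which is the source of the factor $(k+2)$.

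For one step, I would use the standard derivative formula for divided differences with respect to a knot: $\partial_{\xi_j}[\xi_1,\ldots,\xi_{r+1}]g = [\xi_1,\ldots,\xi_j,\xi_j,\ldots,\xi_{r+1}]g$. This is a divided difference of order $r+1 = k+2$ with exactly one repeated knot and all other gaps at least $\delta$, which is precisely the configuration treated in Lemma~\ref{lem:divdiff}. That lemma bounds the repeated-knot divided difference by $4(1+2A)^{k}\delta^{-(k+2)}$ for $x \neq \xi_j$; the exceptional set has measure zero and is handled by continuity, since $k \ge 1$ by \eqref{eq:BspDegreeCond}.

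The mean value theorem along the segment from $\xi_j$ to $\xi_j'$ then bounds the change of the divided difference. The prefactor $(\xi_{k+2}-\xi_1) \le 1+2A$ and its own variation, which is bounded by $2\|\bxi-\bxi'\|_\infty$ times a sup bound on $|[\ldots]g| \lesssim \delta^{-(k+1)}$, contribute the remaining powers. One extra power of $\delta$ is cancelled by the length factor when $\delta$ is small, and using $(\delta\wedge1)^{-(\overline k+1)}$ absorbs the different degrees. The condition $\delta \le 2(1+2A)/\overline k$ is exactly what Lemma~\ref{lem:divdiff} needs for every $k \in S$.

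The main obstacle is that the intermediate knot vectors along the interpolation path must still have spacing at least $\delta$ away from the moving coordinate, and their ordering may change. I would handle this by working with order statistics: the map $\bxi \mapsto$ (sorted $\bxi$) is $1$-Lipschitz in $\|\cdot\|_\infty$, so I may assume both vectors are sorted. Moving each coordinate monotonically then keeps the gaps at least $\min(\text{gap in }\bxi,\text{gap in }\bxi')\ge\delta$, except for the coincidence that Lemma~\ref{lem:divdiff} is designed to handle. The precise power of $\delta$ must also be tracked so that the exponent ends up at $\overline k+1$ rather than $\overline k+2$. If the bookkeeping fails there, I would instead differentiate the recursive definition of $B_k$, using the Lipschitz property in the knots of each ratio coefficient, each bounded by $\delta^{-1}$, and induct on $k$ to obtain the $\delta^{-(k+1)}$ scaling directly.
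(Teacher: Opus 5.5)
Your strategy is the paper's: triangle inequality in $\bbeta$, telescoping over the $k+2$ knots, the knot derivative as a divided difference with one doubled knot, Lemma~\ref{lem:divdiff}, and continuity at the exceptional point. Two steps do not go through as written.

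\textbf{The power of $\delta$.} You write $B_k(x;\bxi)=(\xi_{k+2}-\xi_1)[\xi_1,\ldots,\xi_{k+2}]g$ with $g=(\cdot-x)_+^k$. In this form the knot derivative is a divided difference on $k+3$ points. So Lemma~\ref{lem:divdiff} is applied with $r=k+2$ and gives $\delta^{-(k+2)}$. The prefactor enters only through the upper bound $\xi_{k+2}-\xi_1\le 1+2A$, and an upper bound cannot cancel a negative power of $\delta$. As written, you prove the lemma only with $(\delta\wedge 1)^{-(\overline{k}+2)}$.

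The extra $\delta^{-1}$ comes from Lemma~\ref{lem:divdiff} itself. It bounds every first-order divided difference by $2(1+2A)^{q-1}/\delta$, although $g$ is Lipschitz on $\calX$ with the $\delta$-free constant $k(1+2A)^{k-1}$. Run the same Schumaker recursion (with $\gamma_j\ge (j-1)\delta$) using that constant. This gives $|[\xi_1,\ldots,\xi_j,\xi_j,\ldots,\xi_{k+2}]g|\le 2^{k+1}k(1+2A)^{k-1}/\{(k+1)!\,\delta^{k+1}\}$. Multiply by $1+2A$, and for the two end knots add $|[\xi_1,\ldots,\xi_{k+2}]g|=B_k/(\xi_{k+2}-\xi_1)\le 1/\{(k+1)\delta\}$. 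Then every partial derivative is at most $4(1+2A)^k(\delta\wedge 1)^{-(k+1)}$, which is what the stated constant requires.

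Alternatively, before differentiating, rewrite $(\xi_{k+2}-\xi_1)[\xi_1,\ldots,\xi_{k+2}]g=[\xi_2,\ldots,\xi_{k+2}]g-[\xi_1,\ldots,\xi_{k+1}]g$. Each knot derivative is then a divided difference on $k+2$ points with one doubled knot. This is the configuration $r=q=k+1$ in which the paper applies Lemma~\ref{lem:divdiff}; interior knots produce two such terms. Your fallback via the recursive definition cannot start at $k=0$, because $B_0$ is an indicator and is not Lipschitz in its knots in sup-norm.

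\textbf{The interpolation path.} Your hybrid vectors $\bxi^{(j)}$ need not satisfy the spacing hypothesis of Lemma~\ref{lem:divdiff}, even when $\bxi$ and $\bxi'$ are both sorted. Take $\bxi=(0,\delta,2\delta)$ and $\bxi'=\bxi+(\delta-\eta)(1,1,1)$ with $0<\eta<\delta$. All gaps of both vectors equal $\delta$, yet the first hybrid $(\delta-\eta,\delta,2\delta)$ has a gap of $\eta$. So the claim that moving coordinates one at a time keeps every gap at least the smaller of the corresponding gaps of $\bxi$ and $\bxi'$ is false. The lemma cannot be invoked along that path. The paper's telescoping, which takes a supremum over the moving knot in $(-A,1+A)$, has the same blind spot.

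Your reduction to sorted vectors is sound: the B-spline depends only on the sorted knots, and sorting is $1$-Lipschitz in $\|\cdot\|_\infty$. But then interpolate all knots simultaneously, $\bxi(t)=(1-t)\bxi+t\bxi'$.
\begin{itemize}
\item Each gap of $\bxi(t)$ is a convex combination of gaps $\ge\delta$.
\item The map $t\mapsto B_k(x;\bxi(t))$ is continuous and piecewise $C^1$; kinks occur only where some $\xi_j(t)=x$ when $k=1$.
\item The chain rule gives $|B_k(x;\bxi)-B_k(x;\bxi')|\le (k+2)\max_j\sup_t|\partial_{\xi_j}B_k(x;\bxi(t))|\,\|\bxi-\bxi'\|_\infty$.
\end{itemize}
Here every partial derivative is evaluated at a configuration with exactly one doubled knot and all other gaps $\ge\delta$. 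This recovers the factor $k+2$ without the spacing problem.
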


\begin{proof}
	The following inequality holds:
	\begin{align*}
		\|f_{\bbeta, \bxi} - f_{\bbeta, \bxi^\prime} \|_{L^\infty(\calX)}
		&\leq \sum_{k \in S} \sum_{0 < l \leq J_k} |\beta_{k,l}| \times \| B_k(\cdot; \bxi_{k,l}) - B_k(\cdot; \bxi^\prime_{k,l}) \|_{L^\infty(\calX)}	\\
		&\leq B \sum_{k \in S} \sum_{0 < l \leq J_k} \| B_k(\cdot; \bxi_{k,l}) - B_k(\cdot; \bxi^\prime_{k,l}) \|_{L^\infty(\calX)}.
	\end{align*}
	Define new knots by
	\begin{equation*}
		\bzeta^i_{k,l} := (\xi^\prime_{k,l,1}, \ldots, \xi^\prime_{k,l,i}, \xi_{k,l,i+1}, \ldots, \xi_{k,l,k+2}), \; \forall i \in \{0, 1, \ldots, k+2\}.
	\end{equation*}
	That is, $\bzeta^0_{k,l} = \bxi_{k,l}$ and $\bzeta^{k + 2}_{k,l} = \bxi^\prime_{k,l}$. Expanding the triangle inequality in a telescoping manner yields
	\begin{align*}
		\| B_k(\cdot; \bxi_{k,l}) - B_k(\cdot; \bxi^\prime_{k,l}) \|_{L^\infty(\calX)} &\leq \sum_{i = 0}^{k + 1} \| B_k(\cdot; \bzeta^{i}_{k,l}) - B_k(\cdot; \bzeta^{i+1}_{k,l}) \|_{L^\infty(\calX)}	\\
		&\leq (k + 2) \max_{0 \leq i \leq k + 1} \| B(\cdot; \bzeta^{i}_{k,l}) - B(\cdot; \bzeta^{i+1}_{k,l}) \|_{L^\infty(\calX)}.
	\end{align*}
	We note that
	$\|\bzeta^i_{k,l} - \bzeta^{i+1}_{k,l} \|_\infty = |\zeta^{i+1}_{k,l, i+1} - \zeta^{i}_{k,l,i+1} | = | \xi^\prime_{k,l,i+1} - \xi_{k,l,i+1} | \leq \|\bxi - \bxi^\prime\|_\infty$
	for all $i \in \{0, \ldots, k + 1 \}$. According to \textbf{Theorem 4.27} of \cite{Schumaker2007spline}, the derivative of $B(x ; \bzeta^i_{k,l})$ with respect to a knot is of the form of a divided difference satisfying the conditions of \textbf{Lemma~\ref{lem:divdiff}}. Hence, applying the lemma with $r, q = k + 1$, for fixed $x \in \calX$, $x \neq \xi_{k,l,i+1}$, and $\delta \leq 2(1 + 2A) / \overline{k}$, we obtain
	\begin{align*}
		|B_k(x ; \bzeta^i_{k,l}) - B_k(x ; \bzeta^{i+1}_{k,l}) | &\leq | \xi^\prime_{k,l,i+1} - \xi_{k,l, i+1} | \times \sup_{\xi_{k,l,i+1} \in (-A, 1 + A)} \left| \frac{\partial B(x ; \bzeta^{i}_{k,l})}{\partial \xi_{k,l,i+1}} \right|   \\
        &\leq \frac{4 (1 + 2A)^{k} \| \bxi - \bxi^\prime \|_\infty}{\delta^{k+1}}.
	\end{align*}
	Since B-splines of degree $k \geq 1$ are continuous, the same bound holds for $x = \xi_{k, l, i + 1}$. Substituting this back into the original inequality yields
	\begin{align*}
		\|f_{\bbeta, \bxi} - f_{\bbeta, \bxi^\prime} \|_{L_\infty(\calX)} &\leq B \sum_{k \in S} \sum_{0 < l \leq J_k} (k + 2) \frac{4 (1 + 2A)^{k} \|\bxi - \bxi^\prime \|_\infty}{\delta^{k + 1}}	\\
		&\leq 4 (1 + 2A)^{\overline{k}} (\overline{k} + 2) B \left( \sum_{k \in S} J_k \right) (\delta \wedge 1)^{- (\overline{k} + 1)} \| \bxi - \bxi^\prime \|_\infty.
	\end{align*}
	This completes the proof.
\end{proof}

\begin{lemma}   \label{eq:complexity}
	Consider two functions
    \begin{equation*}
    f_{\bbeta, \bxi} = \sum_{k \in S} \sum_{l = 1}^{J_k} \beta_{k,l} B_k (\cdot \, ; \, \bxi_{k,l}), \quad f_{\bbeta^\prime, \bxi^\prime} = \sum_{k \in S} \sum_{l = 1}^{J_k} \beta_{k,l}^\prime B_k(\cdot \, ; \, \bxi_{k,l}^\prime ),
    \end{equation*}
    suppose that $\bbeta, \bbeta^\prime \in \bbR^{\sum_{k \in S} J_k}$ satisfy $\| \bbeta \|_\infty \leq B$ and $\| \bbeta^\prime \|_\infty \leq B$, and that $\bxi = (\bxi_{k,l})$ and $\bxi^\prime = (\bxi^\prime_{k,l}) \in \calX^{\sum_{k \in S}J_k (k + 2)}$ satisfy $m(\bxi_{k,l}) \geq \delta$ and $m(\bxi^\prime_{k,l}) \geq \delta$ for all $k, l$. If $\delta \leq 2( 1 + 2A) / \overline{k}$, then
	\begin{equation*}
		\| f_{\bbeta, \bxi} - f_{\bbeta, \bxi^\prime} \|_{L^{\infty}(\calX)} \leq \left(\sum_{k \in S} J_k \right) \| \bbeta - \bbeta^\prime \|_\infty + L(B, \bJ, \delta; \overline{k}) \|\bxi - \bxi^\prime \|_\infty
	\end{equation*}
	where
	$L (B, \bJ, \delta ; \overline{k} ):= 4 (1 + 2A)^{\overline{k}} (\overline{k} + 2) B \big( \textstyle\sum_{k \in S} J_k \big) (\delta \wedge 1)^{- (\overline{k} + 1) }$
	and $\overline{k} = \textstyle\max_{k \in S} k.$
\end{lemma}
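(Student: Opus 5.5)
Note first that the left-hand side of the displayed inequality, as typeset, compares $f_{\bbeta,\bxi}$ with $f_{\bbeta,\bxi^\prime}$, yet the right-hand side contains a term in $\|\bbeta-\bbeta^\prime\|_\infty$. The intended quantity is clearly $\|f_{\bbeta,\bxi}-f_{\bbeta^\prime,\bxi^\prime}\|_{L^\infty(\calX)}$, with both coefficients and knots perturbed, and I will prove it in that form; it also implies the literal statement. The plan is to separate the two perturbations with an intermediate function. By the triangle inequality,
\begin{equation*}
\|f_{\bbeta,\bxi}-f_{\bbeta^\prime,\bxi^\prime}\|_{L^\infty(\calX)}
\le \|f_{\bbeta,\bxi}-f_{\bbeta,\bxi^\prime}\|_{L^\infty(\calX)}
+\|f_{\bbeta,\bxi^\prime}-f_{\bbeta^\prime,\bxi^\prime}\|_{L^\infty(\calX)}.
\end{equation*}
This bounds the change in knots with the coefficients held fixed at $\bbeta$, and then the change in coefficients with the knots held fixed at $\bxi^\prime$.

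The knot term is covered by Lemma~\ref{eq:knot_complexity}. Its hypotheses are $\|\bbeta\|_\infty\le B$, $m(\bxi_{k,l})\ge\delta$ and $m(\bxi^\prime_{k,l})\ge\delta$ for all $k,l$, and $\delta\le 2(1+2A)/\overline{k}$, all of which hold here. It yields the bound $L\,\|\bxi-\bxi^\prime\|_\infty$. The constant in Lemma~\ref{eq:knot_complexity} carries an extra factor $|S|$ that the constant $L$ here lacks. However, the final display in that lemma's proof already gives the bound without $|S|$, because summing over $(k,l)$ directly produces $\sum_k J_k$. So I will cite that final bound, or simply absorb the factor into the constant.

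The coefficient term is linear in the coefficients:
\begin{equation*}
f_{\bbeta,\bxi^\prime}-f_{\bbeta^\prime,\bxi^\prime}=\sum_{k\in S}\sum_{l=1}^{J_k}(\beta_{k,l}-\beta^\prime_{k,l})\,B_k(\cdot\,;\bxi^\prime_{k,l}).
\end{equation*}
B-splines in the normalization of the recursive definition satisfy $0\le B_k(x;\bxi)\le 1$. This follows by induction on the recursion, since the two weights are nonnegative and sum to at most one on the support. Bounding each term by $\|\bbeta-\bbeta^\prime\|_\infty$ and summing gives $(\sum_k J_k)\|\bbeta-\bbeta^\prime\|_\infty$.

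The only step that needs care is $\|B_k\|_\infty\le 1$ for this particular normalization. For individually defined knots, the partition-of-unity argument is not directly available. Instead I would run the induction explicitly: on $[\xi_{k,2},\xi_{k,k+1})$ the two weights sum to exactly $1$, and outside this interval only one of the two terms is nonzero, with weight in $[0,1]$. Adding the two bounds completes the proof.
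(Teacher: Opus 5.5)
Your decomposition matches the paper's. The paper inserts $f_{\bbeta',\bxi}$, handling the coefficients first and then applying Lemma~\ref{eq:knot_complexity} with $\bbeta'$; you insert $f_{\bbeta,\bxi'}$. This difference is immaterial, since both coefficient vectors are bounded by $B$ and both knot vectors are $\delta$-separated. Your reading of the typo is right, and so is your remark that the final display in the proof of Lemma~\ref{eq:knot_complexity} already gives the constant without $|S|$.

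The flaw is in the one step you singled out. Your induction for $\|B_k\|_\infty\le 1$ relies on the claim that on $[\xi_{k,2},\xi_{k,k+1})$ the weights $\frac{x-\xi_{k,1}}{\xi_{k,k+1}-\xi_{k,1}}$ and $\frac{\xi_{k,k+2}-x}{\xi_{k,k+2}-\xi_{k,2}}$ sum to one. They do not. The sum is affine in $x$. It equals $1+\frac{\xi_{k,2}-\xi_{k,1}}{\xi_{k,k+1}-\xi_{k,1}}$ at $x=\xi_{k,2}$ and $1+\frac{\xi_{k,k+2}-\xi_{k,k+1}}{\xi_{k,k+2}-\xi_{k,2}}$ at $x=\xi_{k,k+1}$, so it exceeds one on the whole overlap as soon as $k\ge 2$. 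For example, take $k=2$ and knots $(0,1,2,3)$. On $[1,2)$ the weights are $x/2$ and $(3-x)/2$, so the inductive hypothesis only gives $B_2\le 3/2$ there, whereas the true maximum is $3/4$. Your induction therefore closes only for $k=1$, where the overlap is empty. This matters because $S$ may well contain higher degrees: the theorem needs $k^\star>s$.

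The bound itself is true, and the repair is the partition-of-unity argument you set aside. It is available for individually defined knots by embedding. The weights that do sum to one in the de Boor--Cox recursion are the two coefficients of the \emph{same} lower-degree B-spline inside two \emph{adjacent} higher-degree B-splines. So extend $\xi_{k,1}<\cdots<\xi_{k,k+2}$ by $k$ additional distinct knots on each side. Then $B_k(\cdot\,;\bxi_{k})$ is one of the $2k+1$ degree-$k$ B-splines on this sequence. These are nonnegative (your induction does establish nonnegativity) and sum to one on $[\xi_{k,1},\xi_{k,k+2})$, which is the support of $B_k(\cdot\,;\bxi_k)$. Hence $0\le B_k\le 1$, and the coefficient term is at most $(\sum_{k\in S}J_k)\|\bbeta-\bbeta'\|_\infty$, so your proof goes through. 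The paper uses exactly this bound, without comment, in the step $\le\sum_{k,l}|\beta_{k,l}-\beta'_{k,l}|$.
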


\begin{proof}
	By the triangle inequality and \textbf{Lemma~\ref{eq:complexity}},
	\begin{align*}
		\| f_{\bbeta, \bxi} - f_{\bbeta^\prime, \bxi^\prime} \|_{L^{\infty}(\calX)} &\leq \| f_{\bbeta, \bxi} - f_{\bbeta^\prime, \bxi} \|_{L^{\infty}(\calX)} + \| f_{\bbeta^\prime, \bxi} - f_{\bbeta^\prime, \bxi^\prime} \|_{L^{\infty}(\calX)}  \\
        &\leq \sum_{k \in S} \sum_{l = 1}^{J_k} |\beta_{k,l} - \beta^\prime_{k,l}| + L(B, \bJ, \delta; \overline{k}) \cdot \| \bxi - \bxi^\prime \|_\infty  \\
        &\leq \left( \sum_{k \in S} J_k \right) \| \bbeta - \bbeta^\prime \|_\infty + L(B, \bJ, \delta ; \overline{k}) \cdot \| \bxi - \bxi^\prime \|_\infty.
	\end{align*}
	This completes the proof.
\end{proof}

\begin{lemma} \label{eq:apprxlem}
	Let the true function satisfy $f_0 \in B^s_{p,q}([0,1])$. If $s > 0$ satisfies conditions $s > ( \frac{1}{p} - \frac{1}{2} )_+$ and $s < \min \{k, k - 1 + \frac{1}{p} \}$, then the following holds. There exists a cardinal B-spline approximation $\hat{f}_n$ of degree $k$ satisfying
	\begin{equation*}
		\|\hat{f}_{n} - f_0\|_{L^2} \lesssim n^{- \frac{s}{2s + 1}}
	\end{equation*}
	for all $n$, such that $\hat{f}_n$ belongs to the set $\hat\calF_n$ defined by
	\begin{equation*}
		\hat\calF_n := \left\{ f = \sum_{l=1}^{\hat{J}_n} \beta_l B_k(\cdot ; \bxi_{k,l}) : \| \bbeta \|_\infty \leq \hat{B}_n, \; m(\bxi_{k,l}) \geq \hat{\delta}_n, \; \forall l \right\}.
	\end{equation*}
	The quantities $\hat{J}_n$, $\hat{B}_n$, and $\hat{\delta}_n$ are defined as
	\begin{align}  \label{eq:approximation_space}
    \begin{split}
		\hat{J}_n &\asymp n^{\frac{1}{2s + 1}}	\\
		\hat{B}_n &\asymp n^{\frac{1}{2s + 1} (\nu^{-1} + 1) \left( \frac{1}{p} - s \right)_+}	\\
		\hat{\delta}_n &\asymp 2^{- \left( \frac{C_1 + \nu^{-1}}{2s + 1} \log{n} \right)}.
    \end{split}
	\end{align}
	Here, the constant $C_1 > 0$ appears in \textbf{Lemma 2} of \citet{suzuki2018adaptivity} and does not depend on $n$, and $\nu = (s - \omega) / (2 \omega)$, where $\omega = (1 / p - 1 / 2)_+$.
\end{lemma}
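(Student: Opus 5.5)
Our plan is to take the adaptive cardinal B-spline approximation result of \citet{suzuki2018adaptivity} (its Lemma 2, used with $N \asymp n^{1/(2s+1)}$ basis functions) and rewrite its output in the LABS parametrisation, tracking the coefficient size and the knot spacing.

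\textbf{Step 1: invoke the approximation lemma.} Let $\omega=(1/p-1/2)_+$. Under $(1/p-1/2)_+<s<\min\{k,k-1+1/p\}$, the Besov approximation result (as in \citet{suzuki2018adaptivity}, Lemma 2) gives an $N$-term approximant
\[
f_N=\sum_{j} \alpha_{j} M^{k}_{\ell_j,m_j},
\qquad
M^k_{\ell,m}(x)=N_k(2^{\ell}x-m),
\]
where $N_k$ is the cardinal B-spline of degree $k$, the resolution levels satisfy $\ell_j\le K^\ast:=\lceil (1+\nu^{-1})\log_2 N\rceil + C_1$, and
\[
\|f_N-f_0\|_{L^2}\lesssim N^{-s}.
\]
The upper bound $s<\min\{k,k-1+1/p\}$ is exactly the condition under which degree-$k$ splines reproduce $B^s_{p,q}$ with the nonlinear $N$-term rate. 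The lower bound $s>\omega$ is what allows the error to be measured in $L^2$ rather than $L^p$.

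\textbf{Step 2: choose $N$ and read off the knots and coefficients.} We set $N\asymp n^{1/(2s+1)}$, which gives the rate $n^{-s/(2s+1)}$ and $\hat J_n\asymp n^{1/(2s+1)}$.

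Each $M^k_{\ell,m}$ is a B-spline whose $k+2$ equally spaced knots are $2^{-\ell}(m+i)$, $i=0,\dots,k+1$. It can therefore be written as $c_{k,\ell}\,B_k(\cdot;\bxi)$, where $m(\bxi)=2^{-\ell}\ge 2^{-K^\ast}$. Since $2^{-K^\ast}\asymp 2^{-(C_1+\nu^{-1})\log_2 N}$ up to constants, this yields the stated $\hat\delta_n$.

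The normalising constant $c_{k,\ell}$ is bounded above and below by constants depending only on $k$, because $N_k(2^\ell\cdot - m)$ is the de Boor B-spline on those knots (with unit partition-of-unity normalisation). So rescaling only changes coefficients by $O(1)$.

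The coefficients in Suzuki's construction satisfy $|\alpha_j|\lesssim 2^{\ell_j(1/p-s)_+}$, coming from the quasi-interpolant bound $|\alpha_{\ell,m}|\lesssim 2^{\ell/p}\,\|f\|_{L^p(\text{local})}$ combined with the Besov coefficient decay $2^{-\ell s}$. Substituting $\ell_j\le K^\ast$ gives $\|\bbeta\|_\infty\lesssim N^{(1+\nu^{-1})(1/p-s)_+}$, which is $\hat B_n$.

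\textbf{Step 3: handle the domain and boundary.} Cardinal B-splines overlapping $[0,1]$ have knots in $[-2^{-\ell}(k+1),\,1+2^{-\ell}(k+1)]\subset\calX$ once $A\ge k+1$. If the paper's $A$ is smaller, we instead use the boundary-adapted version of the construction, or restrict to $\ell\ge\ell_0$ and absorb the finitely many coarse-level terms into constants. The $L^2$ error is measured on $[0,1]$ only.

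\textbf{Main obstacle.} The hard part is extracting the explicit coefficient bound and the maximal resolution level from \citet{suzuki2018adaptivity}, since that lemma is stated for the tensor-product $d$-dimensional setting. Specialising to $d=1$ must produce exactly the exponents $(\nu^{-1}+1)(1/p-s)_+/(2s+1)$ and $(C_1+\nu^{-1})/(2s+1)$. I would first re-derive the level cutoff $K^\ast$ from the requirement that the truncation error beyond level $K^\ast$ be $O(N^{-s})$ in $L^2$. That error is $\sum_{\ell>K^\ast}2^{-\ell(s-\omega)}\lesssim N^{-s}$, which forces $K^\ast\ge \frac{s}{s-\omega}\log_2 N = (1+(2\nu)^{-1})\log_2 N$, consistent with $\nu=(s-\omega)/(2\omega)$ up to the constant $C_1$. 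Only after fixing $K^\ast$ would I plug it into the coefficient bound.
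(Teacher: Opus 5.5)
Your proposal follows essentially the same route as the paper. You apply Lemma~2 of \citet{suzuki2018adaptivity} with $r=2$ and $N\asymp n^{1/(2s+1)}$, then read off $\hat J_n$, the coefficient bound (the paper simply quotes it from the proof of Suzuki's Proposition~1 rather than re-deriving it via quasi-interpolants), and the minimal knot spacing $2^{-K^\ast}$. Your normalisation and domain checks are extra care the paper omits, though discarding coarse levels would need the two-scale refinement relation rather than just absorbing terms into constants.

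There is one bookkeeping slip. In that lemma the top level is $K^\ast=\lceil \nu^{-1}\log(\lambda N)\rceil+\lceil C_1\log N\rceil+1$, with $C_1$ multiplying $\log N$, and this is exactly what produces the exponent $(C_1+\nu^{-1})/(2s+1)$ in $\hat\delta_n$. Your additive-$C_1$ formula gives $(1+\nu^{-1})/(2s+1)$ instead. Your truncation computation only yields the necessary lower bound $K^\ast\ge(1+(2\nu)^{-1})\log_2 N$, which differs from Suzuki's cutoff in the coefficient of $\nu^{-1}$ and so does not match it ``up to $C_1$''. Read $K^\ast$ off the lemma rather than re-deriving it; downstream, only $\log(1/\hat\delta_n)\asymp\log n$ is used anyway.
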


\begin{proof}
	In \textbf{Lemma 2} of \citet{suzuki2018adaptivity}, set $r = 2$ and $N = N_n := C_N \, n^{\frac{1}{2s + 1}}$. Then, there exists an approximation $\hat{f}_n$ based on cardinal B-spline bases of degree $k$ satisfying $s < \min\{k, k - 1 + 1/p \}$ such that
	\begin{equation*}
		\|f_0 - \hat{f}_{n} \|_{L^2} \lesssim n^{- \frac{s}{2s + 1}} \|f_0\|_{B^s_{p,q}([0,1])}
	\end{equation*}
	for all sufficiently large $n$. Choose $C_N > 0$ so that the above inequality holds for all $n$. Let $\hat{J}_n$ denote the number of basis functions in $\hat{f}_n$. By \textbf{Lemma 2}, we have $\hat{J}_n \leq C_N \, n^{\frac{1}{2s + 1}}$, and we may take $\hat{J}_n \asymp n^{\frac{1}{2s + 1}}$. Re-indexing $\hat{f}_n$, we write
    \begin{equation*}
        \hat{f}_n = \textstyle\sum_{l = 1}^{\hat{J}_n} \hat{\alpha}_l B_k(\cdot ; \hat\bxi_{k,l}).
    \end{equation*}
	Here, $B_k(\cdot ; \hat{\bxi}_{k,l} )$ are the cardinal B-splines appearing in \textbf{Lemma 2} of \citet{suzuki2018adaptivity}, and $\hat{\bxi}_{k,l}$ are the uniform knots defining each basis. Moreover, from the proof of \textbf{Proposition 1} in \citet{suzuki2018adaptivity}, an upper bound on the coefficients $\hat\balpha = (\hat\alpha_l)_l$ is obtained,
	\begin{equation*}
		\| \hat\balpha \|_\infty \lesssim n^{ \frac{1}{2s + 1} (\nu^{-1} + 1) \left( \frac{1}{p} - s \right)_+}.
	\end{equation*}
	Thus, for some constant $\hat{C}_B > 0$, we may define $\hat{B}_n := \hat{C}_B \, n^{\frac{1}{2s + 1} (\nu^{-1} + 1) \left( \frac{1}{p} - s \right)_+}$ so that $\| \hat\balpha \|_\infty \leq \hat{B}_n$. Next, consider the spacing between the knots of the cardinal B-splines in $\hat{f}_n$. From \textbf{Lemma 2}, the minimal spacing between knots is
	\begin{equation*}
		2^{- \lceil \nu^{-1} \log{(\lambda N)} \rceil - \lceil C_1 \log{N} \rceil - 1},
	\end{equation*}
	where $C_1$ and $\lambda > 0$ are constants independent of $n$. Hence,
	\begin{align*}
		2^{- \lceil \nu^{-1} \log{(\lambda N)} \rceil - \lceil C_1 \log{N} \rceil - 1} &\geq 2^{ - \left( \frac{C_1 + \nu^{-1}}{2s + 1} \log{n} \right) - 3 - \nu^{-1} \log{\lambda}} =: \hat{\delta}_n.
	\end{align*}
	With this definition of $\hat{\delta}_n$, the approximation $\hat{f}_n$ belongs to the model space
	\begin{equation*}
		\hat\calF_n := \left\{ f = \sum_{l=1}^{\hat{J}_n} \beta_l B(\cdot ; \bxi_{k,l}) : \| \bbeta \|_\infty \leq \hat{B}_n, \; m(\bxi_{k,l}) \geq \hat{\delta}_n, \; \forall l \right\}.
	\end{equation*}
	This completes the proof.
\end{proof}

\section{Proof of Theorem 4.1.}
\label{sec:main_thm_proof}

\begin{lemma}	\label{eq:covering}
	Let $J \in \bbN$, $B > 0$, and $0 < \delta \leq 2 (1 + 2A) / \overline{k}$. For all $0 < \epsilon \leq B$, the following holds:
	\begin{equation*}
		\calN (\epsilon, \calF(J, B, \delta), \|\cdot\|_{L^2(P_X)}) \leq (J + |S|)^{|S|} \left( \frac{6 R J B }{\epsilon} \right)^{J} \left( \frac{3 R L (B, J, \delta; \overline{k}) (1 + 2A)}{\epsilon} \right)^{J (\overline{k} + 2)}
	\end{equation*}
	where $L(B, J, \delta; \overline{k}) := 4 |S| (1 + 2A)^{\overline{k}} (\overline{k} + 2) B J (\delta \wedge 1)^{- (\overline{k} + 1) }$, and $\overline{k} = \textstyle\max_{k \in S} k.$
\end{lemma}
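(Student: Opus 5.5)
The plan is to reduce the covering of $\calF(J,B,\delta)$ to a product of grids in parameter space, using the Lipschitz bound of Lemma~\ref{eq:complexity}. First we pass from $L^2(P_X)$ to $L^\infty(\calX)$. By Assumption~\eqref{eq:RandDesigAssump},
\[
\|g\|_{L^2(P_X)}^2 \le R\,\|g\|_{L^2}^2 \le R\,\|g\|_{L^\infty(\calX)}^2 .
\]
So an $L^\infty(\calX)$-net of radius $\epsilon/\max(1,\sqrt R)$ is an $L^2(P_X)$-net of radius $\epsilon$. The factor $R$ (rather than $\sqrt R$) that appears in the stated bound is harmless, since it only enlarges the bound, and it gives room for the constants. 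We then decompose
\[
\calF(J,B,\delta)=\bigcup_{\bJ}\calF(\bJ,B,\delta),
\]
the union running over $\bJ\in\bbN_0^{|S|}$ with $\sum_k J_k\le J$. The number of such $\bJ$ is at most $(J+1)^{|S|}\le (J+|S|)^{|S|}$. Since covering numbers are subadditive over unions, it suffices to bound the covering number of each $\calF(\bJ,B,\delta)$ uniformly in $\bJ$.

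Fix $\bJ$ and write $J'=\sum_k J_k\le J$. By Lemma~\ref{eq:complexity} with $\delta\le 2(1+2A)/\overline{k}$, for parameters $(\bbeta,\bxi)$ and $(\bbeta',\bxi')$ in the sieve,
\[
\|f_{\bbeta,\bxi}-f_{\bbeta',\bxi'}\|_{L^\infty(\calX)}\le J'\,\|\bbeta-\bbeta'\|_\infty + L(B,\bJ,\delta;\overline{k})\,\|\bxi-\bxi'\|_\infty .
\]
Here $L(B,\bJ,\delta;\overline{k})\le L(B,J,\delta;\overline{k})$ by monotonicity in $\sum_k J_k$, and in $|S|$ as well, since the version with the $|S|$ factor dominates.

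Next we build the grids. For the coefficients we take an $\eta_1$-grid of $[-B,B]^{J'}$ in sup norm with $\eta_1=\epsilon/(2RJ)$, which has at most $(2B/\eta_1+1)^{J'}\le(6RJB/\epsilon)^{J'}$ points. This uses $\epsilon\le B$ and assumes $R\ge 1$ or absorbs it; if $R<1$, replace $R$ by $R\vee1$ or use the $\sqrt R$ scaling. For the knots we take an $\eta_2$-grid of $\calX^{\sum_k J_k(k+2)}$, which has at most $J(\overline{k}+2)$ coordinates, with $\eta_2=\epsilon/(2RL)$. This gives at most $((1+2A)/\eta_2+1)^{J(\overline{k}+2)}$ points, which is at most $(3RL(1+2A)/\epsilon)^{J(\overline{k}+2)}$ because $\epsilon\le B\le L(1+2A)$ for small $\delta$; this last inequality follows from the size of $L$.

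The main obstacle is that a knot grid point need not satisfy the spacing constraint $m(\bxi_{k,l})\ge\delta$, while Lemma~\ref{eq:complexity} requires both knot vectors to satisfy it. I would handle this in the standard way: keep only those grid cells that meet $\calX_\xi(\bJ,\delta)$, and choose a representative from the sieve inside each such cell, at distance at most $\eta_2$ (or $2\eta_2$) from every point of the cell. Coefficients need no such care because $[-B,B]$ is convex. Lemma~\ref{eq:complexity} then applies to every pair, and each function is within $J'\eta_1+L\cdot 2\eta_2\le\epsilon/R$ of a representative, if necessary after halving the meshes, which the slack in the constants $6$ and $3$ absorbs. Finally, using $J'\le J$ in the exponents (the bases exceed $1$) and multiplying the three counts gives the stated bound.
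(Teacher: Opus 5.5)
Your proposal is correct and follows the paper's proof. Both arguments pass from $L^2(P_X)$ to the sup-norm via (A1), sum over the at most $(J+|S|)^{|S|}$ tuples $\bJ$, and combine sup-norm grids on the coefficients and on $\calX^{\sum_k J_k(k+2)}$ through the Lipschitz bound of Lemma~\ref{eq:complexity}, with $\eta_1=\epsilon/(2RJ)$ and $\eta_2=\epsilon/(2RL)$. Your cell-representative device for keeping knot centres inside $\calX_\xi(\bJ,\delta)$ makes rigorous a step the paper glosses over, since the paper needs centres in the sieve but counts them with the full cube; and your $R<1$ hedge is unnecessary, because a density on $[0,1]$ bounded by $R$ forces $R\ge 1$.
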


\begin{proof}
	By assumption (A1), we have $\| f \|_{L^2 (P_X)} \leq R \| f \|_{L^\infty}$. Moreover, by the definition of \eqref{eq:Sieve}, it is easy to see that
	\begin{align*}
		\calN (\epsilon, \calF(J, B, \delta), \| \cdot \|_{L^2(P_X)} ) &\leq \calN \left(\frac{\epsilon}{R}, \calF (J, B, \delta), \|\cdot\|_{L^\infty} \right)   \\
        &\leq \sum_{\boldsymbol{J}: \sum_{k \in S} J_k \leq J} \calN \left( \frac{\epsilon}{R}, \calF (\boldsymbol{J}, B, \delta), \|\cdot\|_{L^\infty} \right).	
	\end{align*}
	We now derive an upper bound for the covering number inside the sum. Let $\{ \bbeta_1, \ldots, \bbeta_N \}$ and $\{ \bxi_1, \ldots, \bxi_M \}$ be a $\eta_1$-net of $\calC( \bJ, B)$ and a $\eta_2$-net of $\calX_\xi (\bJ, \delta)$, respectively. Given any $f_{\bbeta, \bxi} \in \calF (\bJ, B, \delta)$, there exist elements $\bbeta_i$ and $\bxi_j$ in the cover such that $f_{\bbeta_i, \bxi_j} \in \calF(\bJ, B, \delta)$, and by \textbf{Lemma~\ref{eq:complexity}},
	\begin{align*}
		\|f_{\bbeta,\bxi} - f_{\bbeta_i,\bxi_j}\|_{L^\infty} &\leq \|f_{\bbeta,\bxi} - f_{\bbeta_i,\bxi_j}\|_{L^\infty(\calX)}	\\
		&\leq \|f_{\bbeta,\bxi} - f_{\bbeta_i, \bxi}  \|_{L^\infty(\calX)} + \| f_{\bbeta_i,\bxi} - f_{\bbeta_i,\bxi_j} \|_{L^\infty(\calX)}	\\
		&\leq \left( \sum_{k \in S} J_k \right) \| \bbeta -\bbeta_i \|_\infty + L(B, \bJ, \delta ; \overline{k} ) \cdot \| \bxi - \bxi_j \|_\infty,
	\end{align*}
	where $L(B, \bJ, \delta ; \overline{k}) = 4 |S| (1 + 2A)^{\overline{k}} (\overline{k} + 2) B (\sum_{k \in S} J_k) (\delta \wedge 1)^{- ( \overline{k} + 1 )}$ is the Lipschitz bound appearing in \textbf{Lemma~\ref{eq:complexity}}. Therefore, by taking $\eta_1 = \frac{\epsilon}{2 R \sum_{k \in S} J_k}$ and $\eta_2 = \frac{\epsilon}{2 L(B, \bJ, \delta ; \overline{k}) R}$, the $\frac{\epsilon}{R}$-covering number of $\calF (\bJ, B, \delta)$ is always bounded above by $N M$. Since $\calC (\bJ, B) = [-B, B]^{\sum_{k \in S} J_k}$, $\calX_\xi (\bJ, \delta) \subset \calX^{\sum_{k \in S} J_k (k + 2)}$, and $\calX = [-A, 1 + A]$, it follows that
    \begin{align*}
		&\calN \left( \frac{\epsilon}{R}, \calF(\boldsymbol{J}, B, \delta), \|\cdot\|_{L^\infty} \right)  \\
        &\leq \calN \left( \frac{\epsilon}{2R (\sum_{k \in S} J_k)}, [-B, B]^{\sum_{k \in S} J_k}, \| \cdot \|_\infty \right)   \\
        &\quad \times  \calN \left( \frac{\epsilon}{2 L(B, \bJ, \delta) R}, [-A, 1 + A]^{\sum_{k \in S} J_k (k + 2)}, \|  \cdot \|_\infty \right) \\
        &\leq \left( \frac{6 R \sum_{k \in S} J_k B }{\epsilon} \right)^{\sum_{k \in S} J _k} \times \left( \frac{3 R L(B, \bJ, \delta; \overline{k}) (1 + 2A)}{\epsilon} \right)^{\sum_{k \in S} J_k (k + 2)},
    \end{align*}
	for all $0 < \epsilon \leq B$.
	Furthermore, the number of tuples $\boldsymbol{J} = (J_k)_{k \in S}$ satisfying $\textstyle \sum_{k \in S} J_k \leq J$ is bounded by $\binom{J + |S|}{|S|} \leq (J + |S|)^{|S|}$. Additionally, note that
    \begin{equation*}
        L(B, \bJ, \delta; \overline{k} ) \leq 4 |S| (1 + 2A)^{\overline{k}} (\overline{k} + 2) B J (\delta \wedge 1)^{- ( \overline{k} + 1 )}  =: L(B, J, \delta; \overline{k} )
    \end{equation*}
    uniformly over $\bJ : \sum_{k \in S} J_k \leq J$. Combining these bounds yields
    \begin{align*}
        \calN (\epsilon, \calF(J, B, \delta), \|\cdot\|_{L^2(P_X)}) &\leq \sum_{\bJ : \sum_{k \in S} J_k \leq J} \calN \left( \frac{\epsilon}{R}, \calF (\bJ, B, \delta), \| \cdot \|_{L^\infty} \right)    \\
        &\leq (J + |S|)^{|S|} \left( \frac{6 R J B }{\epsilon} \right)^{J} \left( \frac{3 R L (B, J, \delta; \overline{k}) (1 + 2A)}{\epsilon} \right)^{J (\overline{k} + 2)}
    \end{align*}
\end{proof}

\begin{proposition}	\label{eq:entropy}
	For the sieve (\ref{eq:Sieve}), the following holds:
	\begin{equation*}
		\sup_{\epsilon > \epsilon_n} \log \calN \left( \frac{\epsilon}{36}, \Theta_n, d_{n,H} \right) \lesssim n\epsilon_n^2.
	\end{equation*}
\end{proposition}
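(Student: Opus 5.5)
The plan is to reduce the Hellinger covering of $\Theta_n=\calF_n\times[\underline\sigma_n,\overline\sigma_n]$ to separate coverings of $\calF_n$ in $L^2(P_X)$ and of the $\sigma$-interval, and then invoke Lemma~\ref{eq:covering}. Since the covering number is monotone in the radius, the supremum over $\epsilon>\epsilon_n$ is attained as $\epsilon\downarrow\epsilon_n$, so it suffices to bound $\log\calN(\epsilon_n/36,\Theta_n,d_{n,H})$.

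\textbf{Step 1 (metric comparison).} For Gaussian regression with random design, the per-observation density is $p_{f,\sigma}(x,y)=p_X(x)\,\phi_\sigma(y-f(x))$. The Hellinger affinity between two normals gives, for $\sigma_1,\sigma_2\ge\underline\sigma_n$,
\begin{equation*}
d_H^2(p_{f_1,\sigma_1},p_{f_2,\sigma_2})\le \frac{\|f_1-f_2\|_{L^2(P_X)}^2}{4\sigma_1\sigma_2}+\frac{(\sigma_1-\sigma_2)^2}{\sigma_1\sigma_2}\lesssim \frac{\|f_1-f_2\|^2_{L^2(P_X)}}{\underline\sigma_n^2}+\frac{|\sigma_1-\sigma_2|^2}{\underline\sigma_n^2},
\end{equation*}
using $1-\sqrt{2\sigma_1\sigma_2/(\sigma_1^2+\sigma_2^2)}\le(\sigma_1-\sigma_2)^2/(\sigma_1\sigma_2)$ and $1-e^{-u}\le u$. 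Since the $x_i$ are i.i.d., $d_{n,H}$ coincides with this single-observation distance. The relative error in $\sigma$ can be handled more efficiently by covering $\log\sigma$, but it suffices to use an additive grid on $[\underline\sigma_n,\overline\sigma_n]$ with mesh $\eta\,\underline\sigma_n$. For this grid, $\log$ of the count is $\lesssim\log(\overline\sigma_n/(\eta\underline\sigma_n))\lesssim n^{1/(2s+1)}(\log n)^4$ by the choice of $\overline\sigma_n$ in \eqref{eq:SieveCond}. Taking $\eta\asymp\epsilon_n$, it is therefore enough to cover $\calF_n$ in $L^2(P_X)$ at radius $\epsilon'\asymp\underline\sigma_n\epsilon_n$, which is polynomial in $n^{-1}$.

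\textbf{Step 2 (apply Lemma~\ref{eq:covering}).} First check its hypotheses. We have $\delta_n\le 2(1+2A)/\overline k$ for large $n$, and $\epsilon'\le B_n$ because $B_n\to\infty$. Taking logs of the bound then gives
\begin{equation*}
\log\calN(\epsilon',\calF_n,\|\cdot\|_{L^2(P_X)})\lesssim |S|\log(J_n+|S|)+J_n\log\frac{6RJ_nB_n}{\epsilon'}+J_n(\overline k+2)\log\frac{3RL(B_n,J_n,\delta_n;\overline k)(1+2A)}{\epsilon'}.
\end{equation*}
Each logarithm is polylogarithmic in $n$. The terms $J_n,B_n,1/\epsilon'$ are polynomial in $n$ times $\phi_n=e^{C_\phi(\log n)^2}$, contributing $O(\log n)+C_\phi(\log n)^2$. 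The factor $L$ includes $\delta_n^{-(\overline k+1)}=e^{(\overline k+1)C_\delta(\log n)^2}$, contributing $O((\log n)^2)$. Hence the entropy is $\lesssim J_n(\log n)^2\asymp n^{1/(2s+1)}(\log n)^4=n\epsilon_n^2$, since $n\epsilon_n^2=n^{1/(2s+1)}(\log n)^4$.

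\textbf{Step 3 (combine).} A product of the two nets is an $\epsilon_n/36$-net of $\Theta_n$ in $d_{n,H}$, once the constants in the radii are adjusted. Its log-cardinality is the sum of the two bounds above, both $\lesssim n\epsilon_n^2$.

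The main obstacle is Step 1. The Hellinger bound must be valid uniformly down to the small $\underline\sigma_n$ and up to the huge $\overline\sigma_n$, and the $\sigma$-covering count must not exceed $n\epsilon_n^2$. This is exactly why $\log\overline\sigma_n$ was chosen of order $n^{1/(2s+1)}(\log n)^4$, and the exponent matching there must be checked carefully. The remaining bookkeeping is confirming that every $(\log n)^2$ exponent in $\phi_n,\delta_n$ yields only the factor $(\log n)^2$ absorbed in $\epsilon_n$.
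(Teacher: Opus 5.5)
Your proposal is correct and follows essentially the same route as the paper. Both arguments bound $d_{n,H}$ by $\|f_1-f_2\|_{L^2(P_X)}$ and $|\sigma_1-\sigma_2|$ scaled by $\underline\sigma_n$, cover $\calF_n$ and $[\underline\sigma_n,\overline\sigma_n]$ separately (the $\sigma$-grid costs $\log(\overline\sigma_n/(\underline\sigma_n\epsilon_n))\lesssim n\epsilon_n^2$), and apply Lemma~\ref{eq:covering}, where $\phi_n$ and $\delta_n^{-1}$ each contribute $(\log n)^2$, giving $J_n(\log n)^2\asymp n\epsilon_n^2$; the only differences are cosmetic, in that you derive the Gaussian Hellinger bound from the affinity formula rather than citing Lemma 2.7 of Ghosal and van der Vaart and you work with squared distances.
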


\begin{proof}
	First, by \textbf{Lemma 2.7} of \citet{ghosal2017fundamentals}, for $(f_1, \sigma_1), (f_2, \sigma_2) \in \Theta_n$,
	\begin{equation*}
		d_{n, H} (p_{\theta_1}, p_{\theta_2}) \leq \frac{\|f_1 - f_2\|_{L^2(P_X)}}{\sigma_1 + \sigma_2} + \frac{2|\sigma_1 - \sigma_2|}{\sigma_1 + \sigma_2} \leq \frac{\|f_1 - f_2\|_{L^2(P_X)}}{2 \underline{\sigma}} + \frac{|\sigma_1 - \sigma_2|}{\underline{\sigma}} 
	\end{equation*}
	holds. Therefore,
	\begin{align}  \label{eq:entropy1}
    \begin{split}
		\calN \left( \frac{\epsilon}{36}, \Theta_n, d_{n, H} \right) &\leq \calN \left( \frac{\underline{\sigma}_n \epsilon}{72}, \calF_n, \|\cdot\|_{L^2(P_X)} \right) \cdot \calN \left( \frac{\underline{\sigma}_n \epsilon}{36}, [\underline{\sigma}_n, \overline{\sigma}_n], |\cdot| \right)  \\
        &\lesssim \calN \left( \frac{\underline{\sigma}_n \epsilon}{72}, \calF_n, \|\cdot\|_{L^2(P_X)} \right) \cdot \frac{\overline\sigma_n - \underline\sigma_n}{\underline\sigma_n \epsilon}.
    \end{split}
	\end{align}
	Next, assume that $\epsilon > \epsilon_n$. We apply \textbf{Lemma~\ref{eq:covering}} to the first term. The second term can be bounded easily. Ignoring constant factors, an upper bound for \eqref{eq:entropy1} is given by
	\begin{equation}   \label{eq:covering2}
		(J_n + |S|)^{|S|} \left[ \frac{432 R J_n B_n}{\underline{\sigma}_n \epsilon_n} \right]^{J_n} \left[ \frac{216 R L_n (1 + 2A)}{\underline{\sigma}_n \epsilon_n} \right]^{\overline{k} J_n}  \frac{\overline\sigma_n - \underline\sigma_n}{\underline\sigma_n \epsilon_n}.
	\end{equation}
	Here, the constant $L_n$ is from \textbf{Lemma~\ref{eq:covering}} and
    \begin{equation*}
        L_n := L(B_n, J_n, \delta_n ; \overline{k} ) = 4 |S| (1 + 2A)^{\overline{k}} (\overline{k} + 2) B_n J_n (\delta_n \wedge 1)^{- ( \overline{k} + 1 )}.
    \end{equation*}
    Note that $L_n \asymp B_n J_n (\delta_n \wedge 1)^{-(\overline{k} + 1)}$. Taking logs of \eqref{eq:covering2} and expanding while ignoring constant terms, we obtain
	\begin{equation}   \label{eq:covering3}
		\log J_n + J_n \log \frac{J_n B_n}{\underline{\sigma}_n \epsilon_n} + \log ( \overline\sigma_n - \underline{\sigma}_n ) - \log \underline\sigma_n - \log \epsilon_n.
	\end{equation}
	Finally, substituting the values in \eqref{eq:SieveCond} into \eqref{eq:covering3}, we can bound the quantity by a constant multiple of $n \epsilon_n^2 = n^{\frac{1}{2s + 1}} ( \log n )^4$.
\end{proof}

\begin{proposition}  \label{eq:tail}
	For some constant $C > 2$, the following holds:
\begin{equation*}
	\Pi(\Theta \setminus \Theta_n) = o(e^{-(C + 2) n \epsilon_n^2}).
\end{equation*}
\end{proposition}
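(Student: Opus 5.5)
We need $\Pi(\Theta\setminus\Theta_n)=o(e^{-(C+2)n\epsilon_n^2})$ with $n\epsilon_n^2=n^{1/(2s+1)}(\log n)^4$. The plan is a union bound over the ways a draw can leave $\Theta_n=\mathcal{F}(J_n,B_n,\delta_n)\times[\underline\sigma_n,\overline\sigma_n]$:
\begin{align*}
\Pi(\Theta\setminus\Theta_n)\le{}& \Pi\Big(\textstyle\sum_{k\in S}J_k>J_n\Big)+\Pi\Big(\textstyle\sum_k J_k\le J_n,\ \max_{k,l}|\beta_{k,l}|>B_n\Big)\\
&+\Pi(\exists\,k,l:\ m(\bxi_{k,l})<\delta_n)+\Pi(\sigma<\underline\sigma_n)+\Pi(\sigma>\overline\sigma_n).
\end{align*}
The knot term vanishes identically, since (\ref{eq:UnifPriorCond}) places all knot mass on $\calX^{k+2}(\delta_n)$ and the sieve uses the same $\delta_n$. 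The constants $C_J,C_B,C_{\underline\sigma},C_{\overline\sigma}$ will be chosen at the end, large (or small) relative to the given $C$.

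\textbf{Number of basis functions.} Marginally $J_k$ is Poisson mixed over $M_k\sim\mathrm{Gam}(a_k,b_n)$, which is negative binomial with success probability $p_n=b_n/(1+b_n)$. Hence
\[
\Pi(J_k\ge j)\lesssim \binom{j+a_k-1}{j}(1+b_n)^{-j}\le \exp\{-j(\log b_n-O(\log j))\}.
\]
Since $\log b_n=C_b(\log n)^2$, this gives $\Pi(\sum_k J_k>J_n)\le |S|\max_k\Pi(J_k>J_n/|S|)\le\exp\{-c\,C_J n^{1/(2s+1)}(\log n)^4\}$. Taking $C_J$ large makes this $o(e^{-(C+2)n\epsilon_n^2})$. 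Alternatively one can bound $\Pi(M_k>m)$ and use Chernoff for the Poisson tail; either route works.

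\textbf{Coefficients.} On $\sum_k J_k\le J_n$, a union bound with Gaussian tails gives
\[
\Pi(\cdot)\le 2J_n\exp\{-B_n^2/(2\phi_n^2)\}=2J_n\exp\{-C_B^2 n^{1/(2s+1)}(\log n)^4/2\},
\]
because $B_n/\phi_n=C_Bn^{1/(2(2s+1))}(\log n)^2$. Taking $C_B$ large suffices.

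\textbf{Variance.} The variance terms are the one place needing care, because the sieve endpoints are mild: $\underline\sigma_n$ is only polynomially small. For the lower endpoint, $\sigma^{-2}\sim\mathrm{Gam}(r/2,rR/2)$, so
\[
\Pi(\sigma<\underline\sigma_n)=\Pi(\sigma^{-2}>\underline\sigma_n^{-2})\lesssim \underline\sigma_n^{-(r-2)}\exp\{-rR\,\underline\sigma_n^{-2}/2\},
\]
with exponent $\asymp C_{\underline\sigma}^{-2}n^{1/(2s+1)}(\log n)^4$. Choosing $C_{\underline\sigma}$ small makes this dominate $(C+2)n\epsilon_n^2$. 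For the upper endpoint, the inverse-gamma has a polynomial tail, $\Pi(\sigma^2>t)\lesssim t^{-r/2}$, so $\Pi(\sigma>\overline\sigma_n)\lesssim\overline\sigma_n^{-r}=\exp\{-rC_{\overline\sigma}n^{1/(2s+1)}(\log n)^4\}$. Taking $C_{\overline\sigma}$ large handles it.

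The main obstacle is bookkeeping rather than any single estimate. Each tail exponent must beat $(C+2)n\epsilon_n^2$, which fixes the constants in (\ref{eq:SieveCond}) in terms of $C$. We must then confirm that these choices remain compatible with Proposition~\ref{eq:entropy}. In the entropy bound, $C_J$ and $\log\overline\sigma_n$ enter only linearly into a $\lesssim n\epsilon_n^2$ bound, so the constant-dependence is harmless. The negative binomial tail computation must also absorb the $\log J_n$ loss, which holds because $\log b_n\gg\log J_n$.
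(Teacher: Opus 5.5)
Your proposal is correct and follows the paper's proof. It uses the same union bound over the two $\sigma$-endpoint events, $\sum_k J_k>J_n$, $\|\bbeta\|_\infty>B_n$, and the knot-spacing event (which vanishes under (P5)), and it chooses $C_J, C_B, C_{\underline\sigma}, C_{\overline\sigma}$ relative to $C$ in the same way. The differences are cosmetic: you bound the negative binomial tail through its pmf with a union over $k$, and the Gamma upper tail directly, whereas the paper applies Markov's inequality to the moment generating functions with $t=\log b_n$ and $t=\lambda/2$.
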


\begin{proof}
	By the definition \eqref{eq:Sieve}, it is immediate that the following inequality holds:
\begin{equation}   \label{eq:tail1}
    \Pi (\Theta \setminus \Theta_n) \leq  \Pi( \sigma \notin [\underline\sigma_n, \overline\sigma_n] ) + \Pi (f \notin \mathcal{F}_n) =: I_1 + I_2.
\end{equation}
	Therefore, it suffices to show that $(I_1 + I_2) \cdot e^{(C + 2) n \epsilon_n^2} = o (1)$, which completes the proof. First, consider $I_1$. Since the prior is given by $\sigma^{-2} \sim \text{Gamma}(\alpha, \lambda)$ with hyperparameters $\alpha = r/2$ and $\lambda = rR / 2$, we decompose $I_1$ as
\begin{equation}   \label{eq:tail2}
    I_1 = \Pi (\sigma^{-2} < \overline{\sigma}_n^{-2})  + \Pi (\sigma^{-2} > \underline{\sigma}_n^{-2}).
\end{equation}
	For the first term in \eqref{eq:tail2}, since $e^{-\lambda t} \le 1$ for all $t \ge 0$, we have
\begin{equation}   \label{eq:tail3}
    \Pi (\sigma^{-2} < \overline{\sigma}_n^{-2}) = \frac{\lambda^\alpha}{\Gamma(\alpha)} \int_0^{\overline{\sigma}_n^{-2}} t^{\alpha - 1} e^{-\lambda t} dt \leq \frac{\lambda^\alpha}{\Gamma(\alpha)} \int_0^{\overline{\sigma}_n^{-2}} t^{\alpha - 1} dt.
\end{equation}
	Integrating the right-hand side shows that it is proportional, up to a constant factor, to $(\overline{\sigma}_n)^{-2\alpha}$. For the second term, we apply the moment generating function (MGF) of the Gamma distribution together with Markov's inequality. For any $0 < t < \lambda$,
\begin{equation}   \label{eq:tail4}
    \Pi (\sigma^{-2} > \underline{\sigma}_n^{-2}) \leq e^{-t \underline{\sigma}_n^{-2}} \left( 1 - \frac{t}{\lambda} \right)^{-\alpha}.
\end{equation}
	Substituting $t = \lambda / 2$, the right-hand side becomes proportional to $2^\alpha e^{- \frac{\lambda}{2} \underline{\sigma}_n^{-2} }$. Combining the bounds from \eqref{eq:tail3} and \eqref{eq:tail4}, it is enough to show that
\begin{equation}    \label{eq:tail5}
    \left( e^{-2 \alpha \log \overline{\sigma}_n}  + e^{- \frac{\lambda}{2} \underline{\sigma}_n^{-2}  } \right) \cdot e^{(C + 2) n \epsilon_n^2} = o(1).
\end{equation}
	Now plug in $\underline{\sigma}_n = C_{\underline\sigma} \, n^{- \frac{1}{2(2s + 1)}} (\log n )^{-2}$ and $\overline{\sigma}_n = e^{C_{\overline\sigma} \, n^{\frac{1}{2s + 1}} (\log n )^{4}}$ defined in \eqref{eq:SieveCond}, together with $n \epsilon_n^2 = n^{\frac{1}{2s + 1}} (\log n)^4$. Then \eqref{eq:tail5} holds for sufficiently large constants $C_{\overline{\sigma}} > (C + 2) / r$ and $C_{\underline{\sigma}} < \sqrt{\frac{\lambda}{2 (C + 2)}}$.

	Next, we handle the term $I_2$ in \eqref{eq:tail1}.
	\begin{align*}
		\Pi(f \notin \calF_n) &\leq \Pi \left( \boldsymbol{J} : \sum_{k \in S} J_k > J_n \right) + \Pi \left( (\boldsymbol{J}, \boldsymbol{\beta}) : \sum_{k \in S} J_k \leq J_n, \; \| \bbeta \|_\infty > B_n \right)	\\
		&\; + \Pi \left( (\boldsymbol{J}, \bxi) : \sum_{k \in S} J_k \leq J_n , \; m(\bxi_{k,l}) < \delta_n \text{ for some } \bxi_{k,l} \right)	\\
		&=: I_{21} + I_{22} + I_{23}.
	\end{align*}
	First, under the truncated uniform prior specification (P5), we have $I_{23} = 0$. Hence it remains to show $(I_{21} + I_{22} ) \cdot e^{(C + 2)n \epsilon_n^2} = o(1)$.
	
	For $I_{21}$, since the specified priors implies the negative binomial marginal, we have $J_k \stackrel{iid}{\sim} nb(a_{k}, b_{n} / (1 + b_{n})), \; \forall k$. The MGF of the negative binomial distribution and Markov's inequality imply that
	\begin{equation}   \label{eq:tail6}
		I_{21} \leq e^{-t J_n} \prod_{k \in S} \left( \frac{b_{n}}{1 + b_{n} - e^t} \right)^{a_{k}}, \quad 0 \leq t < \log(1 + b_n).
	\end{equation}
	From the gamma prior specification (P3), we have $b_n = e^{C_b (\log n)^2}$ with $C_b > 0$. Take $t = t_n := \log{b_n}$ and substitute it into \eqref{eq:tail6}. Then it is sufficient to show that
	\begin{equation}   \label{eq:tail7}
		\exp{ \left( -t_n J_n + \big( \sum_{k \in S} a_{k} \big) \log b_n + (C + 2) n\epsilon_n^2 \right) } = o(1).
	\end{equation}
	Moreover, since \eqref{eq:SieveCond} gives $J_n = \lceil C_J \, n^{\frac{1}{2s + 1}}  (\log n)^2 \rceil$, \eqref{eq:tail7} holds whenever we choose $C_J > \frac{(C + 2)}{C_b}$. Next, consider $I_{22}$.
	\begin{align}  \label{eq:tail8}
    \begin{split}
		I_{22} &:= \sum_{\sum_{k \in S} J_k \leq J_n} \Pi (\boldsymbol{J}) \cdot \Pi \big( \bbeta : \| \bbeta \|_\infty > B_n \, | \, \boldsymbol{J} \big)	\\
		&\leq \sup_{\sum_{k \in S} J_k \leq J_n} \Pi (\bbeta : \| \bbeta \|_\infty > B_n \, | \, \boldsymbol{J}).
    \end{split}
	\end{align}
	Here $\pi(\boldsymbol{J}) = \prod_{k \in S} \pi_{k,n} (J_k)$ is the joint pmf of independent negative binomial distributions, where $\pi_{k,n}$ denotes the pmf of $nb(a_{k}, b_{n} / (1 + b_{n}))$. Since (P4) assumes $\beta_{k,l} \sim N(0, \phi_n^2)$ with $\phi_n = e^{C_\phi (\log n)^2}$ and $C_\phi > 0$, it follows that
	\begin{equation}   \label{eq:tail9}
		\sup_{\sum_{k \in S} J_k \leq J_n} \Pi \big( \bbeta : \| \bbeta \|_\infty > B_n \, | \, \boldsymbol{J} \big) \lesssim J_n \cdot \exp{ \left( - \frac{B_n^2}{2 \phi_n^2} \right) }.
	\end{equation}
	Combining \eqref{eq:tail8} and \eqref{eq:tail9}, the proof is completed by showing
	\begin{equation}   \label{eq:tail10}
		\exp{ \left( \log J_n - \frac{1}{2} \frac{B_n^2}{\phi_n^2} + (C + 2) n \epsilon_n^2 \right)} = o(1).
	\end{equation}
	Substituting $B_n := C_B \, \phi_n \, n^{\frac{1}{2(2s + 1)}} ( \log n)^2$ with $C_B > 0$ from \eqref{eq:Sieve}, we obtain that \eqref{eq:tail10} holds whenever $C_B > \sqrt{2 (C + 2)}$. This completes the proof.
\end{proof}

\begin{theorem}
	Consider the model assumptions (A1) and the prior setting (P1)--(P5). Suppose that $f_0 \in B^{s}_{p, q} ([0, 1])$ satisfies $s > \left(\frac{1}{p} - \frac{1}{2} \right)_+$ and the model contains the degree $k$ B-splines satisfying $s < \min \{k, k - 1 + \frac{1}{p} \}$. Then there exists a constant $C > 2$ such that
\begin{equation*}
	\Pi \left( B_{n,2}^\ast (\theta_0, \varepsilon_n) \right) \geq e^{-C n \varepsilon_n^2},
\end{equation*}
	for all sufficiently large $n$.
\end{theorem}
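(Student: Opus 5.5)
The plan is to exhibit an explicit neighbourhood of $\theta_0$, built around the cardinal B-spline approximation $\hat f_n$ of \textbf{Lemma~\ref{eq:apprxlem}}, and to bound its prior mass from below coordinate by coordinate. The first step reduces the KL conditions to norm conditions. For Gaussian regression with random design, when $\sigma\in[\sigma_0,\sigma_0(1+\epsilon_n)]$ and $\|f-f_0\|_{L^2(P_X)}\lesssim\epsilon_n$, standard computations (e.g.\ Lemma 2.7 and the accompanying KL bounds of \citet{ghosal2017fundamentals}) give
\begin{equation*}
K(p_{\theta_0},p_\theta)\lesssim \|f-f_0\|_{L^2(P_X)}^2+|\sigma-\sigma_0|^2 ,
\end{equation*}
and the same bound for $V_{2,0}$, with constants depending on $\sigma_0$. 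It therefore suffices to lower bound the prior probability of
\begin{equation*}
\{\|f-f_0\|_{L^2(P_X)}\le c\,\epsilon_n\}\times\{|\sigma-\sigma_0|\le c\,\epsilon_n\}.
\end{equation*}
The $\sigma$-factor is immediate: the inverse-gamma density is bounded below near $\sigma_0$, so this factor is $\gtrsim\epsilon_n\ge e^{-n\epsilon_n^2}$.

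For the $f$-factor, I restrict to the event that $J_{k^\star}=\hat J_n$ and $J_k=0$ for every $k\neq k^\star$. Write $\hat f_n=\sum_l\hat\alpha_l B_{k^\star}(\cdot;\hat\bxi_l)$. By (A1), $\|f-f_0\|_{L^2(P_X)}\le R^{1/2}\|f-\hat f_n\|_{L^\infty}+R^{1/2}\|\hat f_n-f_0\|_{L^2}$, and the second term is $\lesssim n^{-s/(2s+1)}\le\epsilon_n$. Lemma~\ref{eq:complexity} (with $\delta=\hat\delta_n/2$ and coefficient bound $\hat B_n+1$) then shows that $\|f-\hat f_n\|_{L^\infty}\le\epsilon_n$ whenever
\begin{equation*}
\|\bbeta-\hat\balpha\|_\infty\le \epsilon_n/(2\hat J_n),\qquad \|\bxi-\hat\bxi\|_\infty\le \eta_n:=\epsilon_n/(2L_n),\qquad L_n\asymp \hat B_n\hat J_n\hat\delta_n^{-(k^\star+1)} .
\end{equation*}
All of these quantities are polynomial in $n$, so $\eta_n\ge n^{-c}$. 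Because $\hat\delta_n\gg\delta_n$ is only polynomially small, any perturbed knot vector with $\eta_n\le\hat\delta_n/4$ keeps spacing at least $\hat\delta_n/2\ge\delta_n$, and hence lies in the support of the truncated uniform prior (P5). One caveat: the labels $l$ are exchangeable, and I will simply lower bound by a single ordering, which only helps.

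It remains to multiply the three prior masses. For the knots, each $\bxi_l$ has probability at least $(c\eta_n)^{k^\star+2}/|\calX|^{k^\star+2}$, so the total contribution is $\exp(-O(\hat J_n\log n))$. For the coefficients, each $\beta_l\sim N(0,\phi_n^2)$ must land in an interval of length $\epsilon_n/\hat J_n$ around $\hat\alpha_l$ with $|\hat\alpha_l|\le\hat B_n\ll\phi_n$; the density there is at least $\phi_n^{-1}e^{-1}$, giving probability $\gtrsim\epsilon_n/(\hat J_n\phi_n)$ per coefficient, i.e.\ $\exp(-O(\hat J_n(\log n)^2))$ overall since $\log\phi_n=C_\phi(\log n)^2$.

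The main obstacle is the number-of-atoms factor, since $b_n=e^{C_b(\log n)^2}$ pushes the prior strongly toward $J=0$. The marginal of $J_k$ is $nb(a_k,b_n/(1+b_n))$, so
\begin{equation*}
\Pi(J_{k^\star}=\hat J_n)=\frac{\Gamma(a+\hat J_n)}{\Gamma(a)\hat J_n!}\Big(\frac{b_n}{1+b_n}\Big)^{a}(1+b_n)^{-\hat J_n}\ge \exp\{-C'\hat J_n(\log n)^2\},
\end{equation*}
while $\Pi(J_k=0)\ge 2^{-a_k}$ for the other degrees. Multiplying everything gives a total lower bound of $\exp(-C''n^{1/(2s+1)}(\log n)^2)$. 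This is $\ge e^{-Cn\epsilon_n^2}$ because $n\epsilon_n^2=n^{1/(2s+1)}(\log n)^4$, and the extra $(\log n)^2$ absorbs all constants, so any $C>2$ works for $n$ large. The key point to check carefully is that every factor costs at most $\hat J_n(\log n)^2$ in the exponent, which is exactly why the rate carries the $(\log n)^2$ factor.
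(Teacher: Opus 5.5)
Your proposal is correct and follows essentially the same route as the paper. You reduce $B^\ast_{n,2}(\theta_0,\epsilon_n)$ to an $L^2(P_X)$-ball for $f$ times a small interval for $\sigma$ via the Gaussian KL/variance formulas. You then lower bound the $f$-mass near the cardinal B-spline approximant $\hat f_n$ of Lemma~\ref{eq:apprxlem} by a product of three factors: the negative-binomial mass at $\hat J_n$, the Gaussian coefficient mass, and the uniform knot mass. Lemma~\ref{eq:complexity} converts parameter closeness into sup-norm closeness, and each factor costs at most $O(\hat J_n(\log n)^2)$ in the exponent.

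Your bookkeeping is in places cleaner than the paper's. You apply the Lipschitz lemma with spacing $\hat\delta_n/2$ and coefficient bound $\hat B_n+1$ so the perturbed parameters are genuinely covered, you use $R^{1/2}$, and you explicitly account for $J_k=0$ for $k\neq k^\star$. The argument and the resulting bound $\exp\{-O(n^{1/(2s+1)}(\log n)^2)\}$ are otherwise the same as the paper's.
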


\begin{proof}
	By \textbf{Lemma 2.7} of \citet{ghosal2017fundamentals}, we have
	\begin{align}  \label{eq:KL1}
    \begin{split}
		K(p_{f_0, \sigma_0, i}, p_{f, \sigma, i}) &= \frac{1}{2} \|f_0 - f\|^2_{L^2(P_X)} + \frac{1}{2} \big[ \log{ ( \sigma^2 / \sigma_0^2 )} + ( \sigma_0^2 / \sigma^2 ) - 1 \big] \\
		V_{2, 0} \left( p_{f_0, \sigma_0, i}, p_{f, \sigma, i} \right) &= \left( \frac{\sigma_0}{\sigma^2} \right)^2 \|f_0 - f\|_{L^2(P_X)}^2 + \frac{(\sigma_0^2 - \sigma^2)^2}{2\sigma^4}.
    \end{split}
	\end{align}
	From this, define the new set
	\begin{equation*}
		\hat{B}_{n,2} (\theta_0, \epsilon) := \left\{(f, \sigma) : \|f - f_0\|_{L^2(P_X)} \leq \frac{\sigma_0 \epsilon}{2}, \; \sigma_0^2 \leq \sigma^2 \leq (1 + \epsilon^2) \sigma_0^2 \right\}.
	\end{equation*}
	Combining this with \eqref{eq:KL1}, we obtain
	\begin{align*}
		\Pi (B^\ast_{n,2}(\theta_0, \epsilon_n)) &\geq \Pi (\hat{B}_{n,2}(\theta_0, \epsilon_n))	\\
		&= \Pi \left( \sigma^2 \in [\sigma_0^2, (1 + \epsilon_n^2) \sigma_0^2] \right) \cdot \Pi \left( f: \|f - f_0 \|_{L^2(P_X)} \leq \frac{\sigma_0 \epsilon_n}{2} \right)	\\
		&=: P_1 \cdot P_2.
	\end{align*}
	It now suffices to show that there exists a constant $C > 0$ such that, for all sufficiently large $n$, $- \log ( P_1 \cdot P_2 ) \lesssim n \epsilon_n^2$. We first bound $P_1$. Since $\sigma^2 \sim Inv\text{-}Gam(\alpha, \lambda)$ with $\alpha = r / 2$ and $\lambda = rR / 2$, applying the mean value theorem yields that there exists a constant $\tilde{c}_1 > 0$ such that
	\begin{equation*}
		P_1 = \int_{\sigma_0^2}^{(1 + \epsilon_n^2)\sigma_0^2} \frac{\lambda^\alpha}{\Gamma(\alpha)} x^{- \alpha - 1} e^{- \lambda / x} dx \geq \tilde{c}_1 \epsilon_n^2.
	\end{equation*}
	Therefore,
    \begin{equation*}
        -\log{P_1} \lesssim \log{ ( 1 / \epsilon_n )} \lesssim n \epsilon_n^2.
    \end{equation*}

	Next, consider $P_2$. Since $\|f - g\|_{L^2(P_X)} \leq R \|f - g\|_{L^2}$, we have
	\begin{equation}   \label{eq:KL2}
		P_2 \geq \Pi \left(f: \|f - f_0\|_{L^2} \leq \frac{\sigma_0 \epsilon_n}{2R} \right).
	\end{equation}
	Moreover, by \textbf{Lemma~\ref{eq:apprxlem}}, for $k^\star \in S$ satisfying $s < \min \{ k^\star, k^\star + 1/p - 1\}$, define
    \begin{align*}
        \hat{\calF}_n := \left\{ f = \sum_{l=1}^{\hat{J}_n} \alpha_l B( \cdot; \bxi_{k^\star,l}) : \| \balpha\|_\infty \leq \hat{B}_n, \; m(\bxi_{k^\star,l}) \geq \hat{\delta}_n \text{ for all } l \right\}
    \end{align*}
    with $\hat{J}_n \asymp n^{\frac{1}{2s + 1}} $, $\hat{B}_n = n^{\frac{1}{2s + 1} (\nu^{-1} + 1) \left( \frac{1}{p} - s \right)_+}$, $\nu = (s - \omega) / (2 \omega)$, $\omega = (1 / p - 1 / 2)_+$, and $\log ( 1 / \hat\delta_n) \asymp \log n$.
    Then there exists a function $\hat{f}_n = \textstyle\sum_{l=1}^{\hat{J}_n} \hat{\alpha}_l B( \cdot ; \hat{\bxi}_{k^\star,l}) \in \hat{\calF}_n$ consisting of B-splines of degree $k^\star$ with parameters $\hat{\balpha} = (\hat\alpha_l)_l$ and $\hat{\bxi} = (\hat\bxi_{k^\star, l})_{l}$, such that
	\begin{equation*}
		\| \hat{f}_n - f_0 \|_{L^2} \leq \tilde{c}_2 \, n^{-\frac{s}{2s + 1}}
	\end{equation*}
	for some $\tilde{c}_2 > 0$. The right-hand side can be bounded by a constant multiple of $\epsilon_n = n^{- \frac{s}{2s + 1}} (\log n)^2$. Since $\| f - f_0 \|_{L^2} \leq \| f - \hat{f}_n \|_{L^2} + \| \hat{f}_n - f_0 \|_{L^2}$, $\| f - \hat{f}_n \|_{L^2} \leq \| f - \hat{f}_n \|_{L^2(\calX)}$, and $[0, 1] \subset \calX$, a lower bound for \eqref{eq:KL2} is given by the following: there exists a constant $\tilde{c}_3 > 0$ such that
	\begin{equation*}
		\Pi \left(f: \|f - f_0\|_{L^2} \leq \frac{\sigma_0 \epsilon_n}{2R} \right) \geq \Pi (f \in \hat{\calF}_n: \| f - \hat{f}_n \|_{L^2 (\calX)} \leq \tilde{c}_3 \, \epsilon_n).
	\end{equation*}
	Applying \textbf{Lemma~\ref{eq:complexity}} to the lower bound on the right-hand side yields
    \begin{equation*}
        \| f - \hat{f}_n \|_{L^2 (\calX)} \leq R \| f - \hat{f}_n \|_{L^\infty} \leq R \hat{J}_n \| \bbeta - \hat{\balpha} \|_\infty + \hat{L}_n \| \bxi - \hat{\bxi}\|_\infty
    \end{equation*}
    where $\hat{L}_n := L(\hat{B}_n, \hat{J}_n, \hat{\delta}_n ; \overline{k}) = 4 (1 + 2A)^{\overline{k}} (\overline{k} + 2) \hat{B}_n \hat{J}_n (\hat{\delta}_n \wedge 1)^{- (\overline{k} + 1) }$.
    Thus, the quantity has a lower bound
    \begin{align}   \label{eq:KL3}
    \begin{split}
        \pi_{k,n} (\hat{J}_n) \, &\Pi \left( \bbeta \in \calC (\hat{J}_n, \hat{B}_n) : \| \bbeta - \hat{\balpha} \|_\infty \leq \frac{\tilde{c}_4 \epsilon_n}{\hat{J}_n}, \, \big| \, \hat{J}_n \right)    \\
        &\cdot \Pi \left(\bxi \in \calX_\xi (\hat{J}_n, \hat\delta_n) : \| \bxi - \hat{\bxi}_n \|_\infty \leq \frac{\tilde{c}_5 \epsilon_n}{\hat{L}_n} \, \big| \, \hat{J}_n \right)  \\
        &=: P_{21} P_{22} P_{23}.
    \end{split}
    \end{align}
	for some $\tilde{c}_4, \tilde{c}_5 > 0$ and $\calC (\hat{J}_n, \hat{B}_n) := [-\hat{B}_n, \hat{B}_n]^{\hat{J}_n}$, $\calX_{\xi} (\hat{J}_n, \hat{\delta}_n) := \{ \bxi = (\bxi_{k^\star, l})_l \in \calX^{(k^\star + 2) \hat{J}_n} : m( \bxi_{k^\star, l } ) \geq \hat{\delta}_n, \, l = 1, \ldots, \hat{J}_n \}$. Here, $\pi_{k^\star,n}$ denotes the density of $J_{k^\star} \sim nb(a_{k^\star}, b_n / (1 + b_n) )$. Also, note that $\hat{L}_n \asymp \hat{B}_n \hat{J}_n (\hat{\delta}_n \wedge 1 )^{- (\overline{k} + 1)}$. It remains to show that $- (\log P_{21} + \log P_{22} + \log P_{23}) \lesssim n\epsilon_n^2$.

	We start with $P_{21}$. Substituting $\hat{J}_n \asymp n^{\frac{1}{2s + 1}}$ and $b_n = e^{C_b (\log n)^2}$ into the density $\pi_{k^{\star},n}$ and applying Stirling's approximation, we obtain that there exists a constant $\tilde{c}_6 > 0$ such that
	\begin{equation*}
		P_{21} \geq \tilde{c}_6 \, b_n^{-(\hat{J}_n + a_{k^\star})}.
	\end{equation*}
	Therefore, $- \log P_{21} \lesssim n^{\frac{1}{2s + 1}} (\log n)^2 \lesssim n\epsilon_n^2$, which bounds $P_{21}$.

	Next, consider $P_{22}$ in \eqref{eq:KL3}. Before proceeding, let $\hat\kappa_n = \frac{\tilde{c}_4 \epsilon_n}{\hat{J}_n}$. Each component of $\bbeta$ satisfies $\beta_{k^\star,l} \stackrel{iid}{\sim} N(0, \phi_n^2)$ for $l = 1, \ldots, \hat{J}_n$. Let $\varphi$ denote the density of $N(0,1)$. Then
	\begin{align*}
		P_{22} &\geq \prod_{l=1}^{\hat{J}_n} \int_{( \hat{\alpha}_l - \hat{\kappa}_n) \vee (- \hat{B}_n )}^{(\hat{\alpha}_l + \hat{\kappa}_n ) \wedge \hat{B}_n} \frac{1}{\phi_n} \varphi \left( \frac{x}{\phi_n} \right) dx	\\
		&\gtrsim \left[ \frac{\hat\kappa_n}{\phi_n} e^{-\frac{\hat{B}_n^2}{2\phi_n^2}} \right]^{\hat{J}_n}.
	\end{align*}
	Taking $-\log$ and ignoring constant terms yields
	\begin{equation*}
		-\log P_{22} \lesssim \hat{J}_n \log \frac{\phi_n}{\hat\kappa_n} + \frac{\hat{J}_n \hat{B}_n^2}{\phi_n^2} \lesssim n \epsilon_n^2.
	\end{equation*}
	The second inequality follows by substituting $\hat{J}_n \asymp n^{\frac{1}{2s + 1}}$, $\hat{\kappa}_n \asymp n^{-\frac{s + 1}{2s + 1}} (\log n)^2$, $\phi_n = e^{C_\phi (\log n)^2}$, $\hat{B}_n \asymp n^{\frac{1}{2s + 1} (\nu^{-1} + 1) \left( \frac{1}{p} - \frac{1}{2} \right)_+}$, and $\nu = (s - \omega) / (2 \omega)$ where $\omega = (1 / p - 1 / 2)_+$. 

	Finally, we bound $P_{23}$ in \eqref{eq:KL3}. We have $\bxi_{k^\star,l} = (\xi_{k^\star,l,j})_{j=1}^{k^\star+2} \stackrel{iid}{\sim} U(\calX^{k^\star + 2}(\hat\delta_n))$ for $l = 1, \ldots, \hat{J}_n$, where
	$\calX^{k + 2} (\hat\delta_n) = \{ \bfz \in \calX^{k + 2} : z_1 < \cdots < z_{k + 2}, \; \min_j (z_{j + 1} - z_j) \geq \hat{\delta}_n \}$.
	From \eqref{eq:approximation_space} in \textbf{Lemma~\ref{eq:apprxlem}}, we can verify that $\hat\delta_n \asymp n^{-\tilde{c}_7}$ for some $\tilde{c}_7 > 0$. Let $\hat{\eta}_n = \frac{\tilde{c}_5 \epsilon_n}{\hat{L}_n}$. Since each $\bxi_{k,l}$ is independent, we have
    \begin{equation}    \label{eq:KL4}
        P_{23} \geq \prod_{l=1}^{\hat{J}_n} \Pi \big(\bxi_{k^\star,l} \in \calX^{k^\star + 2}(\hat\delta_n) : \| \bxi_{k^\star,l} - \hat\bxi_{k^\star,l} \|_\infty \leq \hat{\eta}_n \big).
    \end{equation}
	Consider the set
    \begin{equation*}
        B (\hat{\bxi}_{k^\star,l}, \hat\eta_n) = \{\bxi_{k^\star,l} \in \calX^{k^\star + 2} : \| \bxi_{k^\star,l} - \hat{\bxi}_{k^\star,l} ||_\infty \leq \hat\eta_n \},
    \end{equation*}
	which is the $\hat\eta_n$-ball around the uniform knot vector $\hat\bxi_{k,l}$ that defines the approximator $\hat{f}_n$. Note that
    \begin{align*}
        | \xi_{k^\star,l, j + 1} - \xi_{k^\star,l, j} | &\geq | \hat\xi_{k^\star,l,j+1} - \hat\xi_{k^\star,l,j}| - |\xi_{k^\star,l, j + 1} - \hat\xi_{k^\star,l,j+1} - (\xi_{k^\star,l,j} - \hat\xi_{k^\star,l,j} ) |    \\
        &\geq \hat\delta_n - 2 \hat\eta_n,
    \end{align*}
	and since $\epsilon_n$, $\hat{\delta}_n \rightarrow 0$ and $\hat{B}_n$, $\hat{J}_n \rightarrow \infty$ as $n \rightarrow \infty$, we have $\hat\eta_n \asymp \frac{\epsilon_n ( \hat\delta_n \wedge 1 )^{\overline{k} + 1}}{ \hat{B}_n \hat{J}_n} \lesssim \hat{\delta}_n$ for large $n$. Thus, $B (\hat\bxi_{k^\star,l}, \hat\eta_n) \subset \calX^{k^\star + 2} (\hat\delta_n)$ eventually. Hence, for sufficiently large $n$, the right-hand side of \eqref{eq:KL4} is always positive. Moreover, each probability is proportional to the volume of $B (\hat\bxi_{k^\star,l}, \hat\eta_n)$, yielding the lower bound
   \begin{align*}
       P_{23} \gtrsim \hat\eta_n^{(k^\star + 2) \hat{J}_n}.
   \end{align*}
	Taking $-\log$ and using $\hat{J}_n \asymp n^{\frac{1}{2s + 1}}$ and $-\log \hat\eta_n = - \log \frac{\tilde{c}_5 \epsilon_n}{\hat{L}_n} \lesssim \log{n}$, we obtain
    \begin{align*}
        -\log P_{23} \lesssim n^{\frac{1}{2s + 1}} \log n \lesssim n \epsilon_n^2,
    \end{align*}
	which completes the proof.
\end{proof}

\end{document}